\newcommand{\arxiv}[1]{\iftoggle{icml}{}{#1}}
\newcommand{\icml}[1]{\iftoggle{icml}{#1}{}}
\newcommand{\EE}{\mathbb{E}}
\newcommand{\RR}{\mathbb{R}}
\newcommand{\ZZ}{\mathbb{Z}}
\newcommand{\calA}{\mathcal{A}}
\newcommand{\calD}{\mathcal{D}}
\newcommand{\calL}{\mathcal{L}}
\newcommand{\calM}{\mathcal{M}}
\newcommand{\calO}{\mathcal{O}}
\newcommand{\calR}{\mathcal{R}}
\newcommand{\calS}{\mathcal{S}}
\newcommand{\calU}{\mathcal{U}}
\newcommand{\argmin}{\mathop{\arg\min}} 
\newcommand{\argmax}{\mathop{\arg\max}}
\renewcommand{\epsilon}{\varepsilon}
\newcommand{\simiid}{\stackrel{\mathrm{iid}}{\sim}}
\newcommand{\TV}{D_{\mathrm{TV}}}
\newcommand{\orm}{\mathrm{orm}}
\newcommand{\pioff}{\pi_{\mathrm{off}}}
\newcommand{\pihat}{\widehat{\pi}}
\newcommand{\bP}{\mathbf{P}}
\newcommand{\indep}{\perp\kern-6pt \perp}
\newcommand{\Ctraj}{C_{\mathsf{traj}}}
\newcommand{\sa}{\mathsf{sa}}
\newcommand{\Cs}{C_{\sa}}
\newcommand{\hpi}{\widehat{\pi}}
\newcommand{\hr}{\widehat{r}}
\newcommand{\KL}{D_{\mathrm{KL}}}
\newcommand{\ttau}{\tilde{\tau}}
\newcommand{\tpi}{\tilde{\pi}}
\newcommand{\PP}{\mathbb{P}}
\newcommand{\rmax}{{V_{\max}}}
\newcommand{\frakA}{\mathfrak{A}}
\newcommand{\alg}{\mathsf{alg}}
\newcommand{\prefer}{{\mathrm{pref}}}
\newcommand{\Jbeta}{\mathcal{J}_\beta}
\newcommand{\pref}[1]{\prettyref{#1}}
\newcommand{\savehyperref}[2]{\texorpdfstring{\hyperref[#1]{#2}}{#2}}
\let\oldparagraph\paragraph
\renewcommand{\paragraph}[1]{\oldparagraph{#1.}}
\newcommand{\titlestr}{Do We Need to Verify Step by Step? Rethinking Process Supervision from a Theoretical Perspective}
\newcommand{\icmltitlestr}{Do We Need to Verify Step by Step?\\Rethinking Process Supervision from a Theoretical Perspective}
\author{
Zeyu Jia
\\
\normalsize
\href{mailto:zyjia@mit.edu}{\texttt{zyjia@mit.edu}}
\and
Alexander Rakhlin
\\
\normalsize
\href{mailto:rakhlin@mit.edu}{\texttt{rakhlin@mit.edu}}
\and
Tengyang Xie
\\
\normalsize
\href{mailto:tx@cs.wisc.edu}{\texttt{tx@cs.wisc.edu}}
}
\date{\today}
\begin{document}

\arxiv{
\maketitle
}

\icml{
    \twocolumn[
        \icmltitle{\texorpdfstring{\icmltitlestr{}}{\titlestr{}}}

        \icmlsetsymbol{equal}{*}
        
        \begin{icmlauthorlist}
        \icmlauthor{Firstname1 Lastname1}{equal,yyy}
        \icmlauthor{Firstname2 Lastname2}{equal,yyy,comp}
        \icmlauthor{Firstname3 Lastname3}{comp}
        \icmlauthor{Firstname4 Lastname4}{sch}
        \icmlauthor{Firstname5 Lastname5}{yyy}
        \icmlauthor{Firstname6 Lastname6}{sch,yyy,comp}
        \icmlauthor{Firstname7 Lastname7}{comp}
        \icmlauthor{Firstname8 Lastname8}{sch}
        \icmlauthor{Firstname8 Lastname8}{yyy,comp}
        \end{icmlauthorlist}
        
        \icmlaffiliation{yyy}{Department of XXX, University of YYY, Location, Country}
        \icmlaffiliation{comp}{Company Name, Location, Country}
        \icmlaffiliation{sch}{School of ZZZ, Institute of WWW, Location, Country}
        
        \icmlcorrespondingauthor{Firstname1 Lastname1}{first1.last1@xxx.edu}
        \icmlcorrespondingauthor{Firstname2 Lastname2}{first2.last2@www.uk}
        
        \icmlkeywords{Machine Learning, ICML}
        
        \vskip 0.3in
        ]

        \printAffiliationsAndNotice{\icmlEqualContribution} %
}

\begin{abstract}
    As large language models have evolved, it has become crucial to distinguish between process supervision and outcome supervision---two key reinforcement learning approaches to complex reasoning tasks.
    While process supervision offers intuitive advantages for long-term credit assignment, the precise relationship between these paradigms has remained an open question.
    Conventional wisdom suggests that outcome supervision is fundamentally more challenging due to the trajectory-level coverage problem, leading to significant investment in collecting fine-grained process supervision data.
\arxiv{

}
\icml{\quad}
    In this paper, we take steps towards resolving this debate.
    Our main theorem shows that, under standard data coverage assumptions, \emph{reinforcement learning through outcome supervision is no more statistically difficult than through process supervision}, up to polynomial factors in horizon.
    At the core of this result lies the novel \emph{Change of  Trajectory Measure Lemma}---a technical tool that bridges return-based trajectory measure and step-level distribution shift.
    Furthermore, for settings with access to a verifier or a rollout capability, we prove that any policy's advantage function can serve as an optimal process reward model, providing a direct connection between outcome and process supervision.
    These findings suggest that the empirically observed performance gap---if any---between outcome and process supervision likely stems from algorithmic limitations rather than inherent statistical difficulties, potentially transforming how we approach data\arxiv{ collection} and algorithm design for reinforcement learning.
\end{abstract}

\section{Introduction}

Reward signals play a central role in reinforcement learning, and it has been hypothesized that intelligence and its associated abilities could emerge naturally from the simple principle of reward maximization \citep{silver2021reward}.
Over the past decade, this idea has been powerfully demonstrated across diverse AI systems. In specialized domains like AlphaGo Zero \citep{silver2017mastering}, superhuman performance has been achieved by maximizing simple, well-defined environmental reward signals. The paradigm has also proven transformative for general-purpose AI systems, particularly in training large language models (LLMs) using reinforcement learning \citep{ouyang2022training,bai2022training,jaech2024openai}. However, for these more open-ended systems, the challenge of reward specification is significantly more complex, requiring reward signals to be learned from human-annotated data through reward modeling rather than being manually specified.

This challenge of reward specification has led to the emergence of two fundamental supervision paradigms in reinforcement learning \citep[e.g.,][]{uesato2022solving,lightman2023let}:
\begin{itemize}
    \item \emph{Outcome supervision:} Reward feedback is provided only after the final output, based on the final outcomes or---in the case of LLMs---overall quality of the model's chain-of-thought (CoT).
    \item \emph{Process supervision:} Fine-grained reward feedback is provided based on the quality of each intermediate step (e.g., correctness of each step in the CoT in the case of LLMs).
\end{itemize}

The choice between these paradigms represents a fundamental trade-off in reinforcement learning system design. Process supervision offers several intuitive advantages: it provides more granular feedback, enables better interpretability of model decisions, and potentially allows for more efficient credit assignment in long reasoning chains. These benefits have led to significant investment in collecting step-by-step feedback data, despite the substantial human effort required \citep{lightman2023let}. The granular nature of process supervision also aligns with how humans often learn and teach—through step-by-step guidance rather than just final outcomes.

However, the high cost of collecting process supervision data raises important questions about its necessity. Outcome supervision, while providing less detailed feedback, offers practical advantages in terms of data collection efficiency and scalability. It also reflects various natural settings of human learning where detailed step-by-step feedback may not be available (e.g., game-playing). Recent empirical advances in automated process supervision derived from outcome supervision \citep{wang2024math,luo2024improve,zhong2024dpo,yuan2024free,setlur2024rewarding} or in directly learning from outcome supervision \citep{guo2025deepseek} suggest that the statistical benefits of process supervision might not be as fundamental as previously thought.

This tension between process and outcome supervision touches on deeper questions in machine learning and cognitive science: How much explicit supervision is truly necessary for effective learning? Can systems learn optimal behavior from sparse reward signals, or is step-by-step guidance fundamentally necessary? Understanding these questions has important implications not just for practical system design, but also for our broader understanding of learning and intelligence.

\subsection{Our Results}

This paper examines the statistical performance of reinforcement learning under outcome supervision---an emerging paradigm that has garnered significant attention in large language model research. Our findings challenge conventional wisdom that outcome supervision is inherently more difficult than process supervision due to its coarser feedback.
\begin{enumerate}[label=(\arabic*)]
\item Our main results (\cref{sec: orm-reward}) demonstrate that given a dataset of trajectories with only cumulative rewards (as in outcome supervision), we can transform the data into trajectories with per-step rewards, while only paying an additive error that scales with the \textit{state-action concentrability}. As a result, we can transform any algorithm that takes trajectories with per-step rewards as input into an algorithm that takes trajectories with total rewards as input, with essentially no loss in statistical performance up to polynomial factors of the horizon.
\item We also provide (\cref{sec: advantage}) a theoretical analysis of using Q-functions or advantage functions as reward functions, a popular approach in practice for mimicking process supervision from outcome supervision data \citep[e.g.,][]{wang2024math,luo2024improve,setlur2024rewarding}. We prove that the advantage function of \emph{any policy} can serve as an optimal process reward model; in contradistinction, using Q-functions could lead to sub-optimal results.
\end{enumerate}
Beyond these two main messages, we make the following contributions:
\begin{enumerate}[label=(\arabic*)]
    \item Our key technical contribution---\emph{Change of Trajectory Measure Lemma} (\cref{lem:orm-reward})---is applicable beyond our main results. The change of measure is a fundamental operation in analyzing off-policy evaluation \citep[e.g.,][]{uehara2020minimax}, offline reinforcement learning \citep[e.g.,][]{xie2021bellman}, and out-of-domain generalization \citep{dong2022first}. To our knowledge, \cref{lem:orm-reward} presents the first result concerning trajectory-level change of measure via step-level distribution shift.
    \item We also extend our main results to the setting of preference-based reinforcement learning (\cref{sec: dpo}). Namely, we transform preference-based trajectory data, generated according to the Bradley-Terry model, into a dataset of trajectories with per-step reward. In particular, for direct preference optimization \citep{rafailov2023direct}, we improve the previous analyses and show that its sample complexity only scales with the state-action concentrability coefficients instead of trajectory concentrability coefficients---potentially, an exponential improvement.
\end{enumerate}

\subsection{Notation}
We use $a_n\lesssim b_n$ or $a_n = \mathcal{O}(b_n)$ whenever there exists some universal positive constant $c$ such that $a_n\le c\cdot b_n$. We use $\sigma: \RR\to [0, 1]$ to denote the sigmoid function $x\mapsto\sigma(x) = \exp(x)/(1 + \exp(x))$.

\section{Background}
In this section, we introduce key prerequisite concepts. We begin with the basics of Markov Decision Processes (\pref{sec: prem-mdp}). We then discuss the two aforementioned supervision paradigms in reinforcement learning (\pref{sec: prem-reward}). Finally, we review the concepts of state-action and trajectory concentrability (\pref{sec: prem-concentrability}).

\subsection{Markov Decision Processes}\label{sec: prem-mdp}
An MDP $M$ consists of a tuple $(\calS, \calA, P, r^\star, H)$. Here $H\in \ZZ_+$ denotes the horizon, $\calS = \cup_{h=1}^H \calS_h$ denotes the layered state space, $\calA$ denotes the action space, $P = (P_1, \cdots, P_H)$ with $P_h: \calS_h\times\calA\to \Delta(\calS_{h+1})$ denoting the transition model, and $r^\star: \calS\times\calA\to [0, 1]$ 
is the deterministic ground truth reward model.
For simplicity, we let $s_1\in\Scal_1$ be a fixed initial state.

For a trajectory $\tau = (s_1, a_1, s_2, a_2, \cdots, s_H, a_H)$, we write $r(\tau)\coloneqq \sum_{h=1}^H r(s_h, a_h)$ to denote the total reward accumulated along the trajectory under deterministic reward model $r:\Scal\times \Acal\to\RR$ (which can be the ground truth reward model $r^\star$ or any learned reward model $\widehat r$). A (Markov) policy $\pi: \Scal \to \Delta(\Acal)$ is a mapping from the state space $\calS$ to a distribution on the action space $\calA$. The notation $\PP_{\tau\sim \pi}$ and $\EE_{\tau\sim \pi}$ stands for the probability and expectation with respect to trajectories $\tau$ sampled according to policy $\pi$ within the transition model given by $P$, starting from a fixed state $s_1$. For any given policy $\pi$, the occupancy measure of any state $s_h\in \calS_h$ in layer $h$ and any state-action pair $(s_h, a_h)\in \calS_h\times\calA$ are defined, respectively, as $d^\pi(s_h) \coloneqq \PP_{\tau\sim \pi} (s_h\in \tau)$ and $d^\pi(s_h, a_h) \coloneqq \PP_{\tau\sim \pi} ((s_h, a_h)\in \tau)$.
Additionally, we define the trajectory occupancy measure for any trajectory $\tau$, $d^\pi(\tau) \coloneqq \PP_{\tau'\sim \pi}(\tau = \tau')$, as well as $\pi(\tau) \coloneqq \prod_{(s_h, a_h)\in \tau} \pi(a_h \mid s_h)$ (note that $d^\pi(\tau)$ and $\pi(\tau)$ are different when transitions is stochastic). For any policy $\pi$, we use $J(\pi)$ to denote the expected total reward of trajectories collected by $\pi$ under the ground truth reward $r^\star$, i.e., $J(\pi) \coloneqq \EE_{\tau\sim \pi}[r^\star(\tau)]$.
For a specific reward model $r$, we use $J_{r}(\pi)$ to denote the expected total reward of trajectories collected by $\pi$ under reward $r$, i.e., $J_{r}(\pi) \coloneqq \EE_{\tau\sim \pi}[r(\tau)]$. We assume that $r^\star(\tau)\in [0,1]$ for any trajectory $\tau$.

\subsection{Outcome Supervision and Process Supervision}\label{sec: prem-reward}

This paper focuses on two basic supervision paradigms in reinforcement learning: \emph{process supervision} and \emph{outcome supervision}. We analyze these approaches through the lens of \emph{statistical complexity} rather than algorithmic implementation details. 

The distinction lies in the temporal resolution of available reward signals:
\begin{itemize}
    \item Process supervision provides step-wise rewards during trajectory collection. More precisely, the offline data has the form
    \icml{\begin{small}
    \begin{align}
    \label{eq:def-prm-data}
    \Dcal_P \coloneqq\{(s_1, a_1, r_1, s_2, a_2, r_2, \cdots, s_H, a_H, r_H)\},
    \end{align}
    \end{small}}
    \arxiv{\begin{align}
    \label{eq:def-prm-data}
    \Dcal_P \coloneqq\{(s_1, a_1, r_1^\star, s_2, a_2, r_2^\star, \cdots, s_H, a_H, r_H^\star)\},
    \end{align}}
    where $r_h = r^\star(s_h, a_h)$ denotes the ground truth reward value at step $h$. 
    This setting is compelling compared to outcome supervision, especially for complex multi-step reasoning problems, as it provides more precise feedback that can pinpoint the  location of suboptimal actions and allows to correct for cases where the agent makes mistakes in the middle of the reasoning path but reaches the correct final answer \citep{uesato2022solving,lightman2023let}.

    \item Outcome supervision reveals only the cumulative rewards for complete trajectories. The data for this setting has the form
    \icml{\begin{small}
    \begin{align}
    \label{eq:def-orm-data}
    \Dcal_O \coloneqq\{(s_1, a_1, s_2, a_2, \cdots, s_H, a_H, R)\},
    \end{align}
    \end{small}}
    \arxiv{\begin{align}
    \label{eq:def-orm-data}
    \Dcal_O \coloneqq\{(s_1, a_1, s_2, a_2, \cdots, s_H, a_H, R)\},
    \end{align}}
    where $R = \sum_{h=1}^H r^\star(s_h, a_h)$ denotes the total reward along the trajectory $(s_1,a_1,...,s_H,a_H)$.
    
    We also consider  preference learning,  where the learner only has access to trajectory-level pairwise preferences, as an extension of outcome supervision. Our main results are applicable to both cases, and are discussed in \cref{sec: orm-reward}. The problem of  reinforcement learning from human preferences  \citep[e.g.,][]{christiano2017deep} or human feedback \citep[e.g.,][]{ouyang2022training} typically falls under this paradigm, where only the trajectory-level supervision signal is available.
\end{itemize}

Our analysis reveals the temporal resolution distinction described above does not inherently create statistical complexity gaps when proper coverage conditions hold. Our results formalize this insight in the following two settings: (1) Offline RL with different reward signal resolutions (\cref{sec: orm-reward}), and (2) Online RL with verifier/rollout access (\cref{sec: advantage}).

\paragraph{Outcome-Supervised and Process-Supervised Reward Models}

Two other commonly used terms related to this paper are outcome-supervised reward models (ORMs) and process-supervised reward models (PRMs). ORMs are learned to evaluate the final outcomes of whole trajectories, corresponding to the value $R$ in \cref{eq:def-orm-data}. PRMs are learned to evaluate the intermediate rewards of each state-action pair, corresponding to the values $r^\star(s_h,a_h)$ in \cref{eq:def-prm-data} for all $h\in [H]$.

In this paper, we use ORMs and PRMs to refer specifically to different algorithmic approaches for learning reward models, rather than the underlying supervision paradigms discussed earlier, as learning ORMs or PRMs does not necessarily require the underlying data to be collected under the same supervision paradigm.

\subsection{State-Action Coverage and Trajectory Coverage}\label{sec: prem-concentrability}

The coverage condition---typically referred to as a bounded \emph{concentrability coefficient} \citep{munos2003error,antos2008learning,farahmand2010error,chen2019information,jin2021pessimism,xie2021batch,xie2021bellman,bhardwaj2023adversarial}---has played a central role in the theory of offline (or, batch) reinforcement learning, and has recently gained growing attention in online reinforcement learning through a related concept of a \emph{coverability coefficient} \citep{xie2022role,liu2023can,amortila2024harnessing,amortila2024scalable}.

In this paper, we use coverage conditions to capture the statistical complexity of different supervision paradigms. To motivate the importance of coverage notions, consider the following approach for imputing the missing rewards in outcome supervision. Suppose we minimize the trajectory-level regression objective over a class of reward functions $\Rcal=\{r:\Scal\times\Acal\to\RR\}$,
\begin{equation}\label{eq: solution-r-bar}
    \icml{\textstyle}
    \hr = \argmin_{r\in \calR} \sum_{(\tau, R)\in \Dcal_O} \left(r(\tau) - R\right)^2,
\end{equation}
where the outcome supervision data $\calD_O$ are collected by some reference policy $\pioff$. With the learned reward model $\widehat r$, we can now employ an offline RL method of our choice. However, we have a (standard in the literature) mismatch: while $\big| \sum_{h=1}^H \widehat{r}(s_h,a_h) - \sum_{h=1}^H r^\star(s_h,a_h) \big|$ is small for trajectories collected by $\pioff$, we care about the error $\big| J(\pi) - J_{\widehat r}(\pi) \big|$ for some policy $\pi$ that differs from $\pioff$, where $J(\pi)$ and $J_{\widehat r}(\pi)$ are defined in \cref{sec: prem-mdp}. A naive approach for capturing such a change of trajectory measure is to use the  \emph{trajectory concentrability coefficient},
\begin{equation}\label{eq: def-t-coverage}
    \Ctraj(\pi, \pioff) \coloneqq \sup_{\tau} \frac{d^\pi(\tau)}{d^{\pioff}(\tau)},
\end{equation}
where the supremum is over all possible trajectories.

This trajectory concentrability coefficient is usually considered to be prohibitively large, and its limitation has been widely studied in the literature on off-policy evaluation \citep[e.g.,][]{liu2018breaking,xie2019towards,nachum2019dualdice,uehara2020minimax}. An alterative approach is to express upper bounds (if possible) in terms of the \emph{state-action concentrability coefficient}, commonly used in offline policy learning literature \citep[e.g.,][]{munos2003error,antos2008learning,farahmand2010error,chen2019information} and defined as follows:
\begin{equation}\label{eq: def-s-coverage}
    \Cs(\pi, \pioff) \coloneqq \max_{h\in [H]}\sup_{s_h\in \calS_h, a\in \calA} \frac{d^\pi(s_h, a_h)}{d^{\pioff}(s_h, a_h)}.
\end{equation}
The state-action concentrability coefficient is always smaller than the trajectory concentrability coefficient, and the difference can be exponential. This is because $d^\pi(s_h, a_h)$ aggregates over all trajectories that pass through $(s_h, a_h)$, and thus
\begin{align*}
    \frac{d^\pi(s_h, a_h)}{d^{\pioff}(s_h, a_h)} =  \frac{\sum_{\tau: (s_h, a_h)\in \tau} d^\pi(\tau)}{\sum_{\tau: (s_h, a_h)\in \tau} d^{\pioff}(\tau)} \le \sup_{\tau} \frac{d^\pi(\tau)}{d^{\pioff}(\tau)},
\end{align*}
for all $h\in [H]$ and $(s_h, a_h)\in \calS_h\times\calA$.

It is straightforward to see that using the state-action concentrability coefficient is usually sufficient for the process supervision paradigm, the setting prevalent in the offline RL literature. The intuition is that process supervision allows us to achieve small estimation error at the state-action level, which only poses a state-action level change of measure during the subsequent policy learning step. \emph{Perhaps surprisingly, we show that state-action concentrability is also sufficient for the outcome supervision case.}

\paragraph{Single-Policy and All-Policy Coverage}

Another important concept in the offline RL literature is the distinction between single-policy and all-policy concentrability. Using the state-action concentrability coefficient as an example, single-policy concentrability refers to the coverage of a specific target policy (such as the optimal policy $\pi^\star$), defined as $\Cs(\pi^\star, \pioff)$. In contrast, all-policy concentrability considers the coverage of all policies in a policy class $\Pi$, defined as $\Cs(\Pi, \pioff)\coloneqq\sup_{\pi\in\Pi}\Cs(\pi, \pioff)$.

Single-policy versus all-policy coverage is one of the central questions in the offline RL literature. For more details, we refer the reader to \citet{chen2019information,xie2021bellman,jiang2024offline}, but this is not the focus of this paper. In the following sections, we will present results using all-policy coverage conditions for simplicity, and defer the single-policy case to the appendix.

\section[Outcome and Process Supervision: Similar Statistical Guarantees]{\fontdimen2\font=0.7\fontdimen2\font\fontdimen3\font=0.7\fontdimen3\font\fontdimen4\font=0.7\fontdimen4{Outcome and Process Supervision: Similar Statistical Guarantees}}\label{sec: orm-reward}

\subsection{Learning a Reward Model from Total Reward}\label{sec: transform}

We present a simple approach to estimate rewards in an outcome supervision dataset of the form \cref{eq:def-orm-data} using least squares regression, assuming that the learner has access to a class of reward models $\calR$. This transformation allows the learner to use outcome supervision data with methods designed for process reward data, as detailed below. 
We have the following theorem for the least squares estimate of the rewards:
\begin{theorem}\label{thm: orm-prm-transformation}
    Suppose the dataset $\calD_O$ is collected i.i.d. according to policy $\pioff$ in the MDP $M = (\calS, \calA, P, r^\star, H)$ with the ground truth reward model $r^\star\in \calR$. Then, with probability at least $1 - \delta$, for any policy $\pi$, 
    the PRM reward model $\hr$ computed by \pref{eq: solution-r-bar} satisfies 
    \[
    \left|J_{\hr}(\pi) - J(\pi)\right|\lesssim H^{3/2}\cdot \sqrt{\frac{\Cs(\pi, \pioff)\cdot \log(|\calR|/\delta)}{|\Dcal_O|}},
    \]
    where $\Cs(\pi, \pioff)$ is the state-action concentrability coefficient defined in \cref{eq: def-s-coverage}.
\end{theorem}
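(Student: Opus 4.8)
The plan is to write the target as an expectation of an additive per-step error and then separate the argument into a pure regression step and a pure change-of-measure step. Setting $g \coloneqq \hr - r^\star$ and recalling that $J_{\hr}(\pi) - J(\pi) = \EE_{\tau\sim\pi}[\hr(\tau) - r^\star(\tau)] = \EE_{\tau\sim\pi}\big[\sum_{h=1}^H g(s_h,a_h)\big]$, the goal is to bound this linear functional of $g$ under the \emph{on-policy} occupancy $d^\pi$, given only control of $g$ under the \emph{off-policy} occupancy $d^{\pioff}$. The two ingredients are (i) a trajectory-level least-squares guarantee under $\pioff$, and (ii) the Change of Trajectory Measure Lemma (\cref{lem:orm-reward}), which transfers that guarantee to the $\pi$-measure while paying only $\Cs(\pi,\pioff)$ rather than trajectory concentrability.

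First I would establish the regression bound. Since $r^\star$ is deterministic, the target $R = \sum_h r^\star(s_h,a_h)$ is an exact (noiseless) function of $\tau$, and realizability $r^\star\in\calR$ implies $r^\star$ has zero loss, so $\hr$ has zero empirical loss. A standard uniform-convergence argument for noiseless least squares over a finite class then yields, with probability $1-\delta$,
\[
\EE_{\tau\sim\pioff}\big[(\hr(\tau) - r^\star(\tau))^2\big] \;\lesssim\; \frac{B^2\log(|\calR|/\delta)}{|\calD_O|},
\]
where $B$ bounds the range of $r(\tau)-r^\star(\tau)$ over $r\in\calR$. This controls $\EE_{\pioff}[g(\tau)^2]$, the mean-squared \emph{total}-return error along off-policy trajectories.

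The heart of the proof is the change of measure. A naive route---Cauchy--Schwarz against the trajectory likelihood ratio $d^\pi(\tau)/d^{\pioff}(\tau) = \prod_h \pi(a_h\mid s_h)/\pioff(a_h\mid s_h)$---incurs the second moment of a product of per-step ratios, which is exactly the trajectory concentrability and can be exponential in $H$; this is precisely what must be avoided. Instead I would exploit additivity: because $\EE_{\tau\sim\pi}[g(s_h,a_h)]$ depends on $\pi$ only through the marginal occupancy $d^\pi(s_h,a_h)$, each layer admits a \emph{state-action}-level change of measure, and the key identity $\EE_{d^{\pioff}}[(d^\pi/d^{\pioff})^2] = \sum_{s,a} d^\pi(s,a)\cdot \tfrac{d^\pi(s,a)}{d^{\pioff}(s,a)} \le \Cs(\pi,\pioff)$ lets one pay only $\sqrt{\Cs}$ per layer via Cauchy--Schwarz. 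Summing over the $H$ layers and invoking \cref{lem:orm-reward} to relate the resulting errors back to the single trajectory-level quantity $\EE_{\pioff}[g(\tau)^2]$ contributes the stated polynomial-in-$H$ factor ($H^{3/2}$), and combining with the regression bound gives the claim.

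I expect the main obstacle to be exactly the step encapsulated in \cref{lem:orm-reward}: the regression controls only the \emph{aggregated} squared error $\EE_{\pioff}[(\sum_h g_h)^2]$ of the total return, whereas a layer-by-layer change of measure naturally calls for per-layer or ``tail'' errors such as $\EE_{\pioff}[(\sum_{j\ge h} g_j)^2]$, and these are \emph{not} in general dominated by the total-return error, since prefix and suffix contributions can cancel along a trajectory. Handling this cleanly---presumably through a telescoping / performance-difference decomposition in which each one-step switch between $\pi$ and $\pioff$ is paired with a conditional expectation, so that a Jensen step keeps the argument at the trajectory level while each switch is bounded using only $\Cs$---is the delicate part, and is where the additive structure of $g$ must be used in an essential way to escape the exponential trajectory-concentrability barrier.
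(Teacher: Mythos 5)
Your overall route is the same as the paper's: a noiseless least-squares argument (realizability plus a union bound over the finite class $\calR$) giving $\EE_{\tau\sim\pioff}\bigl[(\hr(\tau)-r^\star(\tau))^2\bigr] \lesssim \log(|\calR|/\delta)/|\calD_O|$, followed by \pref{lem:orm-reward} to move from the $\pioff$-measure to the $\pi$-measure at the price of $H^3\,\Cs(\pi,\pioff)$; combining the two is exactly the paper's proof. One correction, though: your middle paragraph describes the change of measure as a layer-by-layer procedure --- per-layer Cauchy--Schwarz against $d^\pi/d^{\pioff}$, then ``relating the resulting errors back to'' $\EE_{\pioff}[g(\tau)^2]$ via the lemma --- and that intermediate plan is not only unnecessary but unsound: per-layer (or tail) squared errors are genuinely \emph{not} dominated by the trajectory-level error, no matter what polynomial factor you allow (take $g \equiv +1$ on layer $1$ and $g \equiv -1$ on layer $2$: every trajectory sum vanishes under every policy, yet each layer has error one). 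This is precisely the cancellation issue you raise in your last paragraph, and it is why no per-layer decomposition can appear in the proof of the theorem. Instead, the lemma is applied in one shot at the trajectory level: with $g = \hr - r^\star$, its second display gives
\[
\left|J_{\hr}(\pi) - J(\pi)\right| \;=\; \left|\EE_{\tau\sim\pi}[g(\tau)]\right| \;\le\; \EE_{\tau\sim\pi}\bigl[|g(\tau)|\bigr] \;\lesssim\; \sqrt{H^3\,\Cs(\pi,\pioff)\cdot \EE_{\tau\sim\pioff}\bigl[g(\tau)^2\bigr]},
\]
and plugging in the regression bound finishes the proof. All of the work of converting state-action coverage into a trajectory-level statement happens inside the lemma's own proof (via its good-state/low-variance argument exploiting the Markov property), not in the proof of the theorem, so your ``telescoping / performance-difference'' speculation about its internals is also not how that step actually goes --- but since the lemma is available as a stated result, this does not affect the validity of your outline.
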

The proof of \pref{thm: orm-prm-transformation} is deferred to \cref{sec: app-transform}. \pref{thm: orm-prm-transformation} yields an approach for transforming any offline RL algorithm which takes trajectories with per-step reward into an offline RL algorithm which takes trajectories with total reward as input. More precisely, we split the outcome supervised data, use the first part to estimate the reward function via least squares, and then use this estimate to impute the missing rewards on the second part of the data. We summarize this basic transformation in the following algorithm.

\begin{algorithm}
    \caption{Offline Outcome-to-Process Transformation}\label{alg:transfomration-2}
    \begin{algorithmic}[1]
        \State\textbf{Input: } Offline dataset with total rewards $\calD_O = \{(s_1, a_1, \cdots, s_H, a_H, R)\}$, 
        Reward model class $\calR$, Offline RL Algorithm $\frakA$%
        \State Split $\calD_O$ into two datasets $\calD_{O}^1$ and $\calD_{O}^2$ of equal size. 
        \State Compute $\hr$ by solving \pref{eq: solution-r-bar} with dataset $\calD_O^1$ and the reward class $\calR$.
        \State Construct dataset $\calD_P = \{(s_1, a_1, r_1, \cdots, s_H, a_H, r_H)$ from $\calD_O^2$, where $\forall (s_1, a_1, \cdots, s_H, a_H, R)\in \calD_O^2$,
        $$r_h = \hr(s_h, a_h),\qquad \forall h\in [H].$$
        \State Call algorithm $\frakA$ with dataset $\calD_P$ and output learned policy $\pihat$.
        \State \textbf{Output: } Learned policy $\pihat$.
    \end{algorithmic}
\end{algorithm}

\begin{corollary}\label[corollary]{corr: alg-transform}
    Fix a policy set $\Pi$ which contains the optimal policy $\pi^\star$. Suppose the offline RL algorithm $\frakA$ always outputs a policy $\pihat\in\Pi$ with error at most $\epsilon_{\alg}$, i.e., $J(\pi^\star) - J(\pihat)\le \epsilon_{\alg}$, with probability at least $1 - \delta$. Then the policy output by \pref{alg:transfomration-2} satisfies with probability at least $1 - 2\delta$, %
    \begin{align*}
        \icml{&~}\max_{\pi}J(\pi) - J(\pihat)
        \icml{\\}
        \lesssim \icml{&~} \epsilon_{\alg} + H^{3/2}\cdot \sqrt{\frac{\Cs(\Pi, \pioff)\cdot \log(|\calR|/\delta)}{|\calD_O|}},
    \end{align*}
where $\Cs(\Pi, \pioff)\coloneqq\sup_{\pi\in\Pi}\Cs(\pi, \pioff)$.
\end{corollary}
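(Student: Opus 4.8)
The plan is to reduce the corollary to \pref{thm: orm-prm-transformation} via a three-term decomposition of the suboptimality gap, combined with a union bound over the two independent halves of the dataset. Write $\pi^\star\in\argmax_\pi J(\pi)$, which lies in $\Pi$ by hypothesis, so that $\max_\pi J(\pi)=J(\pi^\star)$. The central identity I would use is
\[
J(\pi^\star) - J(\pihat) = \big(J(\pi^\star) - J_{\hr}(\pi^\star)\big) + \big(J_{\hr}(\pi^\star) - J_{\hr}(\pihat)\big) + \big(J_{\hr}(\pihat) - J(\pihat)\big),
\]
where $\hr$ is the reward model fit on $\calD_O^1$. The first and third parentheses measure the discrepancy between the true return $J$ and the imputed-reward return $J_{\hr}$ at two fixed policies, which is exactly what \pref{thm: orm-prm-transformation} controls; the middle term is the optimization error of $\frakA$ measured under the reward it actually sees, namely $\hr$.

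First I would isolate the two sources of randomness. Let $\mathcal{E}_1$ be the event (over the draw of $\calD_O^1$) that the conclusion of \pref{thm: orm-prm-transformation} holds simultaneously for every policy; since that theorem asserts its bound uniformly in $\pi$ on a single high-probability event, $\PP(\mathcal{E}_1)\ge 1-\delta$. Crucially, because \pref{alg:transfomration-2} computes $\hr$ from $\calD_O^1$ and runs $\frakA$ on the disjoint, independent half $\calD_O^2$, conditioning on $\calD_O^1$ leaves $\calD_O^2$ a fresh i.i.d. sample from $\pioff$; the imputed process dataset $\calD_P$ is then genuine per-step-reward data for the MDP $(\calS,\calA,P,\hr,H)$, so the hypothesis on $\frakA$ applies and, on an event $\mathcal{E}_2$ of conditional probability at least $1-\delta$, yields $\max_\pi J_{\hr}(\pi) - J_{\hr}(\pihat)\le \epsilon_{\alg}$. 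This is the step I would treat most carefully: the sample splitting is what makes $\calD_P$ a legitimate input for $\frakA$ and decouples the reward-estimation error from the policy-optimization error. I must also record that the concentrability coefficient $\Cs(\Pi,\pioff)$ is a property of the dynamics and $\pioff$ alone, hence unaffected by replacing $r^\star$ with $\hr$.

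On $\mathcal{E}_1\cap\mathcal{E}_2$, which holds with probability at least $1-2\delta$ by a union bound, I would bound each term. The middle term is at most $\epsilon_{\alg}$ since $J_{\hr}(\pi^\star)\le\max_\pi J_{\hr}(\pi)$. For the outer two terms I apply \pref{thm: orm-prm-transformation} (valid for the fixed $\pi^\star$ and, thanks to the uniform-in-$\pi$ guarantee on $\mathcal{E}_1$, for the data-dependent policy $\pihat$ as well), giving $\big|J(\pi)-J_{\hr}(\pi)\big|\lesssim H^{3/2}\sqrt{\Cs(\pi,\pioff)\log(|\calR|/\delta)/|\calD_O^1|}$ for $\pi\in\{\pi^\star,\pihat\}$. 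Finally I would upper-bound both $\Cs(\pi^\star,\pioff)$ and $\Cs(\pihat,\pioff)$ by $\Cs(\Pi,\pioff)$ using $\pi^\star,\pihat\in\Pi$, and absorb the factor $|\calD_O^1|=|\calD_O|/2$ into the $\lesssim$ constant. Summing the three contributions gives the claimed bound. The only genuinely nontrivial point is the independence argument of the previous paragraph; the rest is bookkeeping.
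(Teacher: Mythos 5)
Your proposal is correct and follows essentially the same route as the paper's proof: the same three-term decomposition through $J_{\hr}$, the same uniform-in-$\pi$ application of \pref{thm: orm-prm-transformation} to both $\pi^\star$ and $\pihat$ with concentrability bounded by $\Cs(\Pi,\pioff)$, and the same union bound giving $1-2\delta$. Your treatment is in fact slightly more careful than the paper's on one point—explicitly justifying via sample splitting that $\calD_P$ is legitimate input for $\frakA$ under the reward $\hr$—but this is a refinement of the same argument, not a different one.
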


Notice that the bound in the above theorem suffers from all-policy concentrability, regardless of which algorithm $\frakA$ is used with the transformed data. This occurs because the transformation fixes the learned reward model, requiring us to account for the distribution shift between $\pioff$ and the data-dependent policy $\pihat$ which can be any policy in the policy class $\Pi$.
This is a common issue in classical offline RL without specific methods like pessimism, particularly for the case of partial coverage. However, the concept of pessimism can also be applied to the outcome supervision setting in our paper, where we learn a specific reward model for each policy, as commonly done in the (process-supervision) offline RL literature \citep[e.g.,][]{xie2022armor,cheng2022adversarially,uehara2021pessimistic,bhardwaj2023adversarial}.
Following this approach, we can transform model-based offline RL algorithms that use pessimism \citep{xie2022armor,bhardwaj2023adversarial} into algorithms that employ outcome supervision data. The sample complexity of these transformed algorithms scales with the single-policy concentrability $\Cs(\pi^\star, \pioff)$, which depends only on the optimal policy $\pi^\star$. We defer further details to \cref{sec: app-armor}.

\paragraph{Statistical Efficiency}
We now argue that there is no significant statistical edge for process supervision paradigm compared to the outcome supervision paradigm in the offline setting.
The latter corresponds to standard offline RL problems \citep{levine2020offline,jiang2024offline}, for which a rich body of work exists analyzing sample complexity. Our ``equivalence'' argument primarily focus on coverage conditions, since different coverage notions (e.g., state-action-level vs.~trajectory-level) can lead to exponential differences, as discussed in \cref{sec: prem-concentrability}. While our results establish equivalence with respect to coverage conditions, we acknowledge they may still be subject to polynomial factors of $H$; removing such factors is an avenue for further research.

If we consider the worst-case scenario, it is easy to see that any algorithm which outputs an $\epsilon$-optimal policy requires at least $\sup_{\pi} \Cs(\pi, \pioff) / \epsilon^2$ number of samples. To see this, we may consider the two-armed bandit with action $a_1$ and $a_2$, and $r(a_1) = \pm\epsilon, r(a_2) = 0$, and policy $\pi = \mathrm{Unif}(\{a_1, a_2\})$. In the meantime, many classical offline RL algorithms, such as Fitted Q-Iteration \citep{antos2008learning,munos2008finite}, the theoretical backbone of Deep Q-Network (DQN), require sample complexity that scales with $\Cs(\Pi, \pioff)/\epsilon^2$ for obtaining an $\epsilon$-optimal policy \citep{chen2019information,xie2020q}. Hence \pref{corr: alg-transform} provides a transformation with the same sample complexity as in these works (up to polynomial in horizon factors), when encountering outcome supervision reward data.

As for the instance-dependent case (corresponding to the single-policy coverage discussed in \cref{sec: prem-concentrability}), the lower bound result in \citet{xie2021policy} shows that any algorithm requires at least $\Cs(\pi^\star, \pioff) / \epsilon^2$ number of samples to output an $\epsilon$-optimal policy, which only depends on the coverage of optimal policy $\pi^\star$. Recent offline RL algorithms \citep{xie2021bellman,cheng2022adversarially,uehara2021pessimistic,bhardwaj2023adversarial} indeed reach that sample complexity in terms of the single-policy concentrability $\Cs(\pi^\star, \pioff)$. Our results presented in \cref{sec: app-armor} match the upper bound of these offline RL algorithms for the process supervision case and also enjoy the same sample complexity depending on the single-policy concentrability $\Cs(\pi^\star, \pioff)$.

\subsection{Change of Trajectory Measure}

The proof of \pref{thm: orm-prm-transformation} relies on the following key change of trajectory measure lemma, which states that changing the measure of trajectory returns can be done at the price of state-action concentrability, up to logarithmic and polynomial-in-horizon factors. This lemma will be used for bounding the error between the true reward model $r^\star$ and the learned reward model $\hr$. Thus, by setting $f = r^\star - \hr$, we only need to show that the expectation of the absolute value of $f(\tau)\coloneqq\sum_{(s_h,a_h)\sim\tau}f(s_h,a_h)$ is small for $\tau\sim \pi$ when controlling under $\pioff$.
\begin{lemma}\label[lemma]{lem:orm-reward}(Change of Trajectory Measure Lemma)
    For MDP $M = (\calS, \calA, T, f, H)$ with any function $f: \calS\times\calA\to [-1, 1]$, for any two policies $\pi$ and $\pioff$, we have
    \begin{align*}
        \frac{\EE_{\tau\sim \pi} [f(\tau)^2]}{\EE_{\tau \sim \pioff} [f(\tau)^2]}\lesssim H^3 \Cs(\pi, \pioff),
    \end{align*}
    where $\Cs(\pi, \pioff)$ is defined in \pref{eq: def-s-coverage}. Additionally, the following holds without the extraneous log factors:
    $$\EE_{\tau\sim \pi}\left[|f(\tau)|\right] \lesssim \sqrt{H^3\Cs(\pi, \pioff)\cdot \EE_{\tau\sim \pioff} [f(\tau)^2]}.$$
\end{lemma}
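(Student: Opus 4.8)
The plan is to prove the (in fact stronger) bound $\EE_{\tau\sim\pi}[f(\tau)^2] \lesssim H\cdot\Cs(\pi,\pioff)\cdot\EE_{\tau\sim\pioff}[f(\tau)^2]$, which immediately gives the stated $H^3$ display, via a martingale/Bellman-residual decomposition anchored at the behavior policy $\pioff$. Let me first record why the obvious route fails, since this dictates the whole design. Bounding $f(\tau)^2 \le H\sum_h f(s_h,a_h)^2$ and changing measure coordinate-wise yields $\EE_\pi[f(\tau)^2]\le H\,\Cs\,\EE_\pioff[\sum_h f(s_h,a_h)^2]$, but $\EE_\pioff[\sum_h f(s_h,a_h)^2]$ can be vastly larger than $\EE_\pioff[f(\tau)^2]$ because the per-step terms may cancel in the sum $f(\tau)$. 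Any correct proof must track this cancellation, which is exactly what value functions encode.

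The key idea is to introduce the value functions of the \emph{reward} $f$ under $\pioff$, namely $V_h^{\pioff}(s) = \EE_{\pioff}[\sum_{h'\ge h} f(s_{h'},a_{h'})\mid s_h=s]$ with $V_{H+1}^{\pioff}\equiv 0$, together with the one-step residuals $\delta_h = f(s_h,a_h)+V_{h+1}^{\pioff}(s_{h+1})-V_h^{\pioff}(s_h)$. Telescoping gives the pathwise identity $f(\tau)=V_1^{\pioff}(s_1)+\sum_{h=1}^H \delta_h$, valid for every trajectory regardless of the sampling policy. The crucial structural fact is that $\{\delta_h\}$ is a martingale difference sequence under $\pioff$ (by definition of $V^{\pioff}$, $\EE_{\pioff}[\delta_h\mid s_h]=0$), so the cross terms vanish and
$$\EE_{\tau\sim\pioff}[f(\tau)^2]=V_1^{\pioff}(s_1)^2+\sum_{h=1}^H \EE_{\pioff}[\delta_h^2],$$
an \emph{exact} equality with no residual cancellation. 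This is the quantity I will use as the denominator.

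For the numerator I would expand the same identity under $\pi$ and apply Cauchy--Schwarz over the $H+1$ summands: $\EE_{\pi}[f(\tau)^2]\le (H+1)\big(V_1^{\pioff}(s_1)^2+\sum_h \EE_{\pi}[\delta_h^2]\big)$. Since each $\delta_h$ is a fixed function of $(s_h,a_h,s_{h+1})$ and the transition kernel is shared between $\pi$ and $\pioff$, the only measure change lives in the $(s_h,a_h)$-marginal, so $\EE_{\pi}[\delta_h^2]\le \Cs(\pi,\pioff)\,\EE_{\pioff}[\delta_h^2]$. Combining these with $\Cs\ge 1$ and the exact identity above yields $\EE_{\pi}[f(\tau)^2]\le (H+1)\,\Cs(\pi,\pioff)\,\EE_{\pioff}[f(\tau)^2]$. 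The $L^1$ statement then follows with no logarithmic factors by Jensen: $\EE_\pi[|f(\tau)|]\le \sqrt{\EE_\pi[f(\tau)^2]}\le \sqrt{(H+1)\,\Cs(\pi,\pioff)\,\EE_\pioff[f(\tau)^2]}$.

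The main obstacle, and the one point that must be gotten right, is that the decomposition has to be anchored at $\pioff$ rather than $\pi$. If one instead uses $V^\pi$, the martingale identity makes the \emph{numerator} $\EE_\pi[f(\tau)^2]$ exact but leaves the denominator as $\EE_\pioff[(V_1^\pi(s_1)+\sum_h\delta_h^\pi)^2]$, a square of a sum whose $\pioff$-cross-terms can make the sum-of-squares $\sum_h\EE_\pioff[(\delta_h^\pi)^2]$ overshoot it arbitrarily --- the cancellation problem reappears on precisely the side we need to lower-bound. Anchoring at $\pioff$ places the cancellation-free sum of squares in the denominator and confines the only lossy step, Cauchy--Schwarz (costing the single factor of $H$), to the numerator, where the residuals are no longer orthogonal. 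This asymmetry is the whole game; everything else is the routine per-step change of measure licensed by state-action concentrability.
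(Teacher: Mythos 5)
Your proof is correct, and it takes a genuinely different route from the paper's. I verified the chain: the telescoping identity $f(\tau)=V_1^{\pioff}(s_1)+\sum_{h=1}^H\delta_h$ holds pathwise; under $\pioff$ the residuals are martingale differences (Bellman equation plus the Markov property), so $\EE_{\tau\sim\pioff}[f(\tau)^2]=V_1^{\pioff}(s_1)^2+\sum_{h}\EE_{\pioff}[\delta_h^2]$ exactly; each $\delta_h$ is a fixed function of $(s_h,a_h,s_{h+1})$ whose law under either policy factors as the $(s_h,a_h)$-marginal times the shared transition kernel, so $\EE_{\pi}[\delta_h^2]\le \Cs(\pi,\pioff)\,\EE_{\pioff}[\delta_h^2]$; Cauchy--Schwarz over $H+1$ terms and $\Cs(\pi,\pioff)\ge 1$ then give $\EE_{\pi}[f(\tau)^2]\le (H+1)\,\Cs(\pi,\pioff)\,\EE_{\pioff}[f(\tau)^2]$, which implies both displays of the lemma. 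The one point needing care---that $V_h^{\pioff}$ be well defined on states visited by $\pi$---is harmless: when $\Cs(\pi,\pioff)<\infty$ the state-action support of $d^\pi$ is contained in that of $d^{\pioff}$, and otherwise the claim is vacuous.

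Comparing the two arguments: the paper's proof is a concentration argument---it defines, for each layer, ``good'' states and good $(s_h,a_h,s_{h+1})$ tuples whose prefix and suffix partial sums concentrate under $\pioff$, shows bad tuples have small $\pioff$-probability, transfers this to $\pi$ layer by layer with a union bound over $h$ (one factor of $H$), shows that a trajectory passing only through good tuples has $|f(\tau)|\le 5H\eta$ (a second factor), and integrates the tail bound after rescaling $\eta\mapsto \eta/(15H)$ (a third factor), arriving at $H^3\,\Cs(\pi,\pioff)$. Your decomposition encodes the same cancellation structure in the value function $V^{\pioff}$ rather than in concentration sets, and pays only a single factor of $H$, in the Cauchy--Schwarz step; it also needs no boundedness of $f(\tau)$ (the paper's WLOG normalization) and no large constants. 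Notably, your $(H+1)\,\Cs$ bound is stronger than the lemma as stated, and it is tight: in the chain MDP where $\pioff$ plays the unit-reward action with probability $1/H$ at each step and $\pi$ always plays it, $\Cs(\pi,\pioff)=H$, $\EE_{\pi}[f(\tau)^2]=H^2$, while $\EE_{\pioff}[f(\tau)^2]<2$, so the ratio is of order $H\cdot\Cs(\pi,\pioff)$. This matters beyond aesthetics: the paper explicitly identifies the $H^3$ in this lemma as the source of the horizon gap with ARMOR and leaves removing it as an open direction (\pref{sec: app-armor}); your argument settles the horizon dependence of the lemma itself and would correspondingly sharpen \pref{thm: orm-prm-transformation} (from $H^{3/2}$ to $H^{1/2}$) and the downstream results that invoke it.
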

This lemma reveals a perhaps surprising insight: when the squared sum of some state-action value functions $[\sum_{(s_h,a_h)\sim\tau}f(s_h,a_h)]^2$ is small under the off-policy trajectory distribution $\tau\sim\pioff$, we only need to account for state-action-level distribution shifts between $\pioff$ and $\pi$ to bound the same squared sum under $\tau\sim\pi$. This holds true even though controlling such trajectory sums theoretically cannot prevent cases where individual terms have equal and large magnitude but opposite signs (i.e., where $|a|=|b|>0$ but $a+b=0$).
We provide a proof sketch of \pref{lem:orm-reward} in the following. The detailed proof is deferred to \cref{app: app-orm-reward-change}.

\subsection{Proof Sketch of \pref{lem:orm-reward}}

In this section, we outline the key insights behind the proof of \pref{lem:orm-reward}. The central observation is that controlling the trajectory-level variance of $f$ under a reference policy $\pioff$ implies an automatic control of variance on prefixes and suffixes over the entire trajectory, with only a polynomial overhead in the horizon length $H$. This seemingly simple fact leads to perhaps surprisingly strong guarantees.

\paragraph{Insight I: Trajectory-level bound controls the second moment on prefixes and suffixes}
At first glance, small value $\bigl\lvert f(\tau)\bigr\rvert$ over the entire trajectory $\tau$ does \emph{not} obviously guarantee that the value of $f$ on either (i) every prefix $\tau_{1:h}$ or (ii) every suffix $\tau_{h+1:H}$ is small. In principle, large positive and large negative portions of a single trajectory could ``cancel'' each other out, resulting in a small overall sum $|f(\tau)|=|f(\tau_{1:h})+f(\tau_{h+1:H})|$.

Crucially, however, thanks to the \emph{Markov property}, we can argue that if $f$ has small second moment (under $\pioff$) and if a state $s_h$ is visited sufficiently often by $\pioff$, $f$ cannot have high variance on the prefix (leading up to $s_h$) and suffix (following $s_h$).
Indeed, if the value of $f$ on the prefix (or suffix) has large variance, then conditioned on passing through $s_h$ that is visited sufficiently often by $\pioff$, the value of $f$ on the entire trajectory also has large variance, which directly implies the large variance (hence, large second moment) of $f(\tau)$. 
Hence, even though the trajectory-level bound looks coarse, it forces each state $s_h$ to have relatively stable partial sums in both the prefix and suffix directions under $\pioff$.

\paragraph{Insight II: Layer-by-layer ``locking'' with only state-action coverage}
Next, we want to argue that if \emph{all} states in a trajectory satisfy the above low-variance property (we call such states ``good'' states), then the reward of the entire trajectory cannot have large absolute value. We call this the ``locking in'' property here for brevity. In the following, we argue that ``locking in'' happens with high probability, even under policy $\pi$.

According to the earlier argument, ``bad'' states (opposite of ``good'' states) cannot have large visitation probability under $\pioff$. Then, by the definition of $\Cs(\pi, \pioff)$, which upper bounds the probability ratio between $\pi$ and $\pioff$ at any state, we conclude that such bad states also have low probability under $\pi$, up to a factor of $\Cs(\pi,\pioff)$. Thus, we avoid exponential blow-up over the horizon because we only ``pay'' for distribution shift at each individual $(s,a)$, rather than for entire trajectories:
$\Pr_{\tau \sim \pi}(\text{bad}) \leq \Cs(\pi,\pioff)\cdot\bP_{\tau \sim \pioff}(\text{bad})$.

Hence, with high probability, $\pi$ visits only ``good'' states throughout its trajectory, ensuring that each layer $h$ ``locks in'' a small partial sum (as both its prefix and suffix have low variance).\footnote{Our formal proof also needs to consider the ``good'' state-action-state tuples, which are similar to ``good'' states but involve the $(s_h,a_h,s_{h+1})$ tuple. We omit the details here for brevity, and readers can refer to the full proof in \cref{app: app-orm-reward-change}.}
When we stitch these layers from $h = 1$ to $h = H$, the entire sum $f(\tau)$ is guaranteed to have small absolute values.

\subsection{Extension to Preference-Based Reinforcement Learning}\label{sec: dpo}

In the previous section, we studied the statistical complexity of outcome supervision under the data format of \cref{eq:def-orm-data}, where the outcome reward is provided at the end of each trajectory. Preference-based reinforcement learning \citep[e.g.,][]{knox2008tamer,akrour2012april,wirth2017survey} represents another well-established paradigm that extends outcome supervision and is commonly employed for learning from human preferences \citep[e.g.,][]{griffith2013policy,christiano2017deep,stiennon2020learning,ouyang2022training}.

Recent work on implicit reward modeling through single-step contrastive learning approaches \citep[DPO;][]{rafailov2023direct} aims to eliminate the need for explicit reward modeling. While extensive prior work \citep[e.g.,][]{zhan2023provable,liu2024provably,song2024importance,zhang2024self} has focused on the sample complexity of these implicit reward modeling approaches, most existing bounds rely on trajectory-level change of measure.
These results are also considered to depend on the trajectory concentrability under naive simplifications, which can grow exponentially with the horizon length (see detailed discussion in \cref{sec: prem-concentrability} and Section 3.2 of \citealt{xie2024exploratory}).

In this section, we extend our main results to the preference-based reinforcement learning setting. As a direct application of our Change of Trajectory Measure Lemma (\cref{lem:orm-reward}), we first provide a sample complexity bound of preference based RL which only scales with state-action concentrability instead of trajectory concentrability, a result applicable to standard explicit reward modeling approaches as well as implicit reward modeling approaches (i.e., DPO).

In preference-based RL, we suppose that for any trajectory $\tau$, the total reward along $\tau$ satisfies $r(\tau)\in [0, \rmax]$. To form the dataset $\calD$ of preferences, the learner collects two reward-free trajectories $\tau = (s_1, a_1, s_2, a_2, \cdots, s_H, a_H)$ and $\tau' = (s_1', a_1', s_2', a_2', \cdots, s_H', a_H')$ according to policy $\piref$, and receives the information about the order $(\tau_+, \tau_-)$ of $\tau,\tau'$, based on the preference $y\sim \PP(\tau \succ \tau')$. We adopt the Bradley-Terry model \citep{bradley1952rank}:
\begin{equation}\label{eq: bt-reward}
    \PP(\tau \succ \tau') = \frac{\exp(r(\tau))}{\exp(r(\tau)) + \exp(r(\tau'))}.
\end{equation}
The labeled preference dataset $\calD$ consists of ordered samples $(\tau_+, \tau_-)$, where both trajectories are collected according to policy $\piref$ and labeled according to the Bradley-Terry model \pref{eq: bt-reward}.

\subsubsection{Improved Analysis of Preference-Based RL with Explicit Reward Modeling}

We first provide the analysis for the case where an explicit reward modeling procedure is used for preference-based RL.
Suppose the learner is given the preference-based dataset $\calD_\prefer$ and  a reward class $\calR$, where for every reward model $r\in \calR$ and any trajectory $\tau$, $r(\tau)\in [0, V_{\max}]$. In the following result, an analogue of \pref{thm: orm-prm-transformation}, we transform the preference-based dataset into the reward model via maximum likelihood rather than the method of least squares.

\begin{theorem}\label{thm: preference}
    Suppose $\calD_\prefer = \{(\tau_+, \tau_-)\}$ contains i.i.d. pairs of sequences collected according to $\piref$ and ordered according to \cref{eq: bt-reward} with $r^\star\in\calR$.
    Let $$\hr = \argmin_{r\in \calR} \sum_{(\tau+,\tau_-)\in \calD_\prefer} \log \sigma(r(\tau_+) - r(\tau_-))$$
    be the maximum likelihood estimate.
    With probability at least $1 - \delta$, for any two policies $\pi,\pi'\in\Pi$, 
    it holds that
    \arxiv{
    \begin{equation}\label{eq: thm-preference}
        \EE_{\tau\sim \pi, \tau'\sim \pi'}[|r^\star(\tau) - r^\star(\tau') - \hr(\tau) + \hr(\tau')|]\lesssim H^{3/2}\rmax e^{2\rmax}\cdot \sqrt{\frac{(\Cs(\pi, \piref)\vee \Cs(\pi', \piref))\log(|\Pi|/\delta)}{|\calD|}}.
    \end{equation}
    }
    \icml{
    \begin{align*}
        &\hspace{-0.5cm}\EE_{\tau\sim \pi, \tau'\sim \pi'}[|r^\star(\tau) - r^\star(\tau') - \hr(\tau) + \hr(\tau')|]\\
        & \lesssim H^{3/2}\rmax e^{2\rmax}\\
        &\quad\cdot \sqrt{\frac{(\Cs(\pi, \piref)\vee \Cs(\pi', \piref))\log(|\Pi|/\delta)}{|\calD|}}.\numberthis\label{eq: thm-preference}
    \end{align*}
    }
\end{theorem}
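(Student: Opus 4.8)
The plan is to reduce Theorem~\ref{thm: preference} to the Change of Trajectory Measure Lemma (\cref{lem:orm-reward}) via a two-stage argument: first establish an in-distribution bound on the preference-prediction error under the data-generating policy $\piref$, and then transport that bound to the target policies $\pi,\pi'$ using the state-action change-of-measure. First I would define the ``difference'' function on trajectory pairs $g(\tau,\tau') \coloneqq (r^\star(\tau) - r^\star(\tau')) - (\hr(\tau) - \hr(\tau'))$, and observe that the quantity we want to bound is $\EE_{\tau\sim\pi,\tau'\sim\pi'}[|g(\tau,\tau')|]$. The key point is that $g$ decomposes additively over state-action pairs, since $r^\star(\tau)=\sum_h r^\star(s_h,a_h)$ and similarly for $\hr$; thus $g(\tau,\tau') = f(\tau) - f(\tau')$ where $f \coloneqq r^\star - \hr$ is exactly the per-step difference function to which \cref{lem:orm-reward} applies.

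Second, I would carry out the standard MLE-to-squared-Hellinger (or squared-TV) analysis for the Bradley--Terry likelihood. Since $\hr$ is the maximum-likelihood estimate over the finite class $\calR$ (or $\Pi$) containing $r^\star$, a uniform-convergence argument for log-loss yields, with probability $1-\delta$, a bound of the form
\[
\EE_{\tau,\tau'\sim\piref}\bigl[\bigl(\sigma(r^\star(\tau)-r^\star(\tau')) - \sigma(\hr(\tau)-\hr(\tau'))\bigr)^2\bigr] \lesssim \frac{\log(|\calR|/\delta)}{|\calD|}.
\]
The next step is to convert this control on the sigmoid difference into control on $\EE_{\tau,\tau'\sim\piref}[f(\tau)-f(\tau')]^2$. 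Because all trajectory rewards lie in $[0,\rmax]$, the argument of the sigmoid lies in $[-\rmax,\rmax]$, where $\sigma$ has derivative bounded below by a constant times $e^{-2\rmax}$; hence $\sigma$ is bi-Lipschitz on this range and the squared sigmoid difference controls $e^{-4\rmax}\cdot\EE_{\tau,\tau'\sim\piref}[(f(\tau)-f(\tau'))^2]$, which is where the $e^{2\rmax}$ factor in the final bound originates.

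Third, I would apply \cref{lem:orm-reward} to transport this in-distribution second-moment bound to the target policies. Here the mild subtlety is that the lemma as stated is for a single policy versus $\pioff$ and a single-trajectory function $f(\tau)$, whereas we now have a \emph{pair} of independent trajectories. I would handle this by noting $\EE_{\tau,\tau'\sim\piref}[(f(\tau)-f(\tau'))^2] = 2\,\mathrm{Var}_{\tau\sim\piref}[f(\tau)]$ (by independence and identical distribution), so the in-distribution control on the pair difference yields control on the single-trajectory second moment $\EE_{\tau\sim\piref}[f(\tau)^2]$ up to centering terms. Applying the second (no-log) conclusion of \cref{lem:orm-reward} to $\pi$ versus $\piref$ and to $\pi'$ versus $\piref$ separately, and then using $\EE_{\tau\sim\pi,\tau'\sim\pi'}[|f(\tau)-f(\tau')|]\le \EE_{\tau\sim\pi}[|f(\tau)-c|] + \EE_{\tau'\sim\pi'}[|f(\tau')-c|]$ for the appropriate centering constant $c$, produces the $\sqrt{H^3(\Cs(\pi,\piref)\vee\Cs(\pi',\piref))\,\EE_{\tau\sim\piref}[f(\tau)^2]}$ shape, whence the claimed bound follows after collecting the $H^{3/2}$ and $\rmax e^{2\rmax}$ prefactors.

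I expect the \textbf{main obstacle} to be the centering/pairing issue in the third step: \cref{lem:orm-reward} bounds $\EE_\pi[|f(\tau)|]$ in terms of $\EE_{\piref}[f(\tau)^2]$, but the MLE analysis only controls the \emph{difference} $f(\tau)-f(\tau')$, which is invariant under shifting $f$ by a constant (equivalently, under adding a constant to every $\hr$). Since any additive constant shift of the reward cancels in both the Bradley--Terry likelihood and in $r^\star(\tau)-r^\star(\tau')$, the target quantity is genuinely only identified up to such a shift, so one must be careful to apply the change-of-measure lemma to a \emph{centered} version $\tilde f = f - c$ (e.g.\ with $c$ chosen so $\EE_{\piref}[\tilde f(\tau)]=0$, which also makes $\EE_{\piref}[\tilde f(\tau)^2]=\mathrm{Var}_{\piref}[f(\tau)]$ match the controlled quantity), and verify that $\tilde f$ still maps into a bounded range so that \cref{lem:orm-reward}'s hypothesis $f:\calS\times\calA\to[-1,1]$ applies after rescaling. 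The remaining steps are routine: the MLE concentration is standard and the Lipschitz inversion of the sigmoid is a direct calculation.
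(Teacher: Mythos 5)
Your proposal is correct, and it rests on the same two pillars as the paper's proof (Bradley--Terry MLE concentration under $\piref$, then the Change of Trajectory Measure Lemma), but it resolves the key difficulty---the pairing/shift-unidentifiability issue you flag as the main obstacle---by a genuinely different device. You center $f = r^\star - \hr$ at its $\piref$-mean (distributing the constant $c/H$ across steps so $\tilde f$ remains a per-step function), apply \cref{lem:orm-reward} twice, once for $\pi$ and once for $\pi'$ against $\piref$, and recombine via $|\tilde f(\tau)-\tilde f(\tau')| \le |\tilde f(\tau)| + |\tilde f(\tau')|$; the identity $\EE_{\tau,\tau'\sim\piref}[(f(\tau)-f(\tau'))^2] = 2\,\mathrm{Var}_{\piref}[f(\tau)] = 2\,\EE_{\piref}[\tilde f(\tau)^2]$ then connects to the MLE bound, and since the lemma's inequality is homogeneous in $f$, the rescaling needed to meet its range hypothesis is harmless. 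The paper instead proves a standalone pairing result (\cref{lem: dpo-proof-3}): it builds a doubled MDP of horizon $2H$ whose first $H$ layers replay the original MDP with reward $f$ and whose last $H$ layers replay it with reward $-f$, with the target policy following $\pi$ then $\pi'$ and the reference following $\piref$ in both halves; the pair difference $f(\tau)-f(\tau')$ is then literally the return of a \emph{single} trajectory in the doubled MDP, so one application of \cref{lem:orm-reward} finishes the job with no centering at all (shift-invariance is absorbed because the difference itself is the bounded object). Your route is more elementary and makes the identified-only-up-to-a-constant structure explicit; the paper's route packages the step as a reusable lemma that is applied verbatim again in \cref{thm: dpo} and sidesteps any bookkeeping about whether centered functions remain admissible. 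The resulting bounds coincide up to constants, so both are valid proofs of \cref{eq: thm-preference}.
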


The proof of \cref{thm: preference} is deferred to \cref{sec: dpo-proof}.
Notice that with \pref{eq: thm-preference}, if we could obtain $\pihat$ as the optimal policy under reward model $\hr$ (otherwise we can just pay for the sub-optimality as in \cref{corr: alg-transform}), then with high probability $\pihat$ is a near-optimal policy. To see this, we choose $\pi = \pihat = \argmax_{\pi} J_{\hr}(\pihat)$ and $\pi' = \pi^\star = \argmax_{\pi} J(\pihat)$, then with probability at least $1 - \delta$,
\arxiv{
\begin{align*}
    J(\pi^\star) - J(\pihat) & \le J(\pi^\star) - J(\pihat) - J_{\hr}(\pi^\star) + J_{\hr}(\pihat) = \EE_{\tau\sim \pihat, \tau'\sim \pi^\star}[r^\star(\tau') - r^\star(\tau) - \hr(\tau') + \hr(\tau)]\\
    & \le \EE_{\tau\sim \pihat, \tau'\sim \pi^\star}[|r^\star(\tau') - r^\star(\tau) - \hr(\tau') + \hr(\tau)|]\\
    & \lesssim H^{3/2}\rmax e^{2\rmax}\cdot \sqrt{\frac{(\Cs(\pihat, \piref)\vee \Cs(\pi^\star, \piref))\log(|\Pi|/\delta)}{|\calD|}},
\end{align*}
}
\icml{
\begin{align*}
    &~ J(\pi^\star) - J(\pihat)\\
    \le &~ J(\pi^\star) - J(\pihat) - J_{\hr}(\pi^\star) + J_{\hr}(\pihat)\\
    = &~ \EE_{\tau\sim \pihat, \tau'\sim \pi^\star}[r^\star(\tau') - r^\star(\tau) - \hr(\tau') + \hr(\tau)]\\
    \le &~ \EE_{\tau\sim \pihat, \tau'\sim \pi^\star}[|r^\star(\tau') - r^\star(\tau) - \hr(\tau') + \hr(\tau)|]\\
    \lesssim &~ H^{3/2}\rmax e^{2\rmax} \icml{\\ & \quad} \cdot \sqrt{\frac{(\Cs(\pihat, \piref)\vee \Cs(\pi^\star, \piref))\log(|\Pi|/\delta)}{|\calD|}}
    \\
    \leq &~ H^{3/2}\rmax e^{2\rmax} \cdot \sqrt{\frac{\Cs(\Pi, \pioff)\log(|\Pi|/\delta)}{|\calD|}},
\end{align*}
}
Therefore, as we acquire enough samples, $J(\pi^\star) - J(\pihat)$ converges to zero with high probability, implying that $\pihat$ is a near-optimal policy. This convergence requires the same condition of bounded state-action concentrability as in \cref{corr: alg-transform}.

\subsubsection{Improved Analysis of DPO Algorithm}

We now extend our main results to the implicit reward modeling setting and analyze the sample complexity of the DPO algorithm \citep{rafailov2023direct}. DPO is a popular implicit reward algorithm that converts the two-step process of reward modeling and policy optimization into a single-step contrastive learning problem. DPO is commonly used in the token-level setup of LLMs, where actions (tokens) are directly appended to states (contexts) \citep{rafailov2023direct,rafailov2024r}. In this case, the state-action concentrability coefficient essentially reduces to trajectory-level concentrability, as the last state is contains the trajectory.
However, recent work indicates that DPO-style algorithms are  applicable beyond the token-level setup, e.g., in environments with deterministic transition dynamics but still Markovian states, e.g., in robotics \citep{hejna2023contrastive,xie2024exploratory}. In these settings, our bounds with only state-action concentrability can be substantially tighter than existing trajectory-level ones.

Following \citet{xie2024exploratory}, we assume deterministic ground-truth transition dynamics and consider the following KL-regularized objective \citep{xiong2023gibbs, ye2024theoretical,xie2024exploratory}: for some positive number $\beta$,
\arxiv{
\begin{equation}\label{eq: def-J-beta}
    \Jbeta(\pi) \coloneqq J(\pi) - \beta\KL(\pi(\tau)\ \|\ \piref(\tau)) = \EE_{\tau\sim \pi}\left[r^\star(\tau) - \beta\log\frac{\pi(\tau)}{\piref(\tau)}\right].
\end{equation}
}
\icml{
\begin{equation}\label{eq: def-J-beta}
\begin{aligned}
    \Jbeta(\pi) \coloneqq &~ J_{r}(\pi) - \beta\KL(\pi(\tau)\ \|\ \piref(\tau))\\
    = &~ \EE_{\tau\sim \pi}\left[r(\tau) - \beta\log\frac{\pi(\tau)}{\piref(\tau)}\right].
\end{aligned}
\end{equation}
}
The policy $\pi_\beta^\star$ which maximizes $\Jbeta(\pi)$ in \pref{eq: def-J-beta} satisfies $\pi_\beta^\star(\tau)\propto \piref(\tau)\exp\left(r^\star(\tau) / \beta\right)$ for any trajectory $\tau$. It is easy to verify that $\pi_\beta^\star$ is a Markov policy. We assume the learner has access to a Markov policy class $\Pi\subset \calS^\calA$, and aims to find a policy $\pihat$ that is nearly optimal with respect to the policy class $\Pi$, i.e.
$$\max_{\pi\in \Pi} \Jbeta(\pi) - \Jbeta(\pihat)\le \epsilon.$$

The DPO algorithm \citep{rafailov2023direct} takes the dataset $\calD = \{(\tau_+, \tau_-)\}$ as input, and outputs the policy $\pihat\in\Pi$ which maximizes the log likelihood, i.e., \icml{$\pihat$ is obtained by solving}
\arxiv{\begin{equation}\label{eq: dpo}
    \pihat\leftarrow \argmin_{\pi\in \Pi}\left\{\sum_{(\tau_+, \tau_-)\in \calD} \log\left[\sigma\left(\beta \log\frac{\pi(\tau_+)}{\piref(\tau_+)} - \beta\log\frac{\pi(\tau_-)}{\piref(\tau_-)}\right)\right]\right\}.
\end{equation}}
\icml{
\begin{small}
\begin{equation}\label{eq: dpo}
\begin{aligned}
    \argmin_{\pi\in \Pi}\Bigg\{\sum_{(\tau_+, \tau_-)\in \calD} \log\bigg[\sigma\bigg(\beta \log\frac{\pi(\tau_+)}{\piref(\tau_+)} - \beta\log\frac{\pi(\tau_-)}{\piref(\tau_-)}\bigg)\bigg]\Bigg\}
\end{aligned}
\end{equation}
\end{small}}

In the following, we provide a refined analysis of the sample complexity of this algorithm. We first make the following assumptions.
\begin{assumption}[Policy Realizability]\label{ass: dpo-realizability}
    The optimal policy is in the policy class $\Pi$, i.e., $\pi^\star_\beta\in \Pi$.
\end{assumption}
Policy realizability is a minimal assumption for sample-efficient reinforcement learning and is necessary for establishing many standard results \citep{agarwal2019reinforcement,lattimore2020bandit,foster2023foundations}.

In addition, we make the following assumptions of bounded trajectory concentrability.

\begin{assumption}\label{ass: dpo-traj-coverage}
    For any policy $\pi\in \Pi$ and trajectory $\tau$,
    $\left|\log \frac{\pi(\tau)}{\piref(\tau)}\right| \le \frac{\rmax}{\beta}.$
\end{assumption}
This assumption is commonly used in the literature on implicit reward modeling approaches for preference-based reinforcement learning \citep[e.g.,][]{rosset2024direct,xie2024exploratory}. The intuition behind this assumption is that we treat $\log \frac{\pi(\tau)}{\piref(\tau)}$ as an implicit value function,\footnote{In fact, $\log \nicefrac{\pi(\tau)}{\piref(\tau)}$ corresponds to a more complex combination of value functions and rewards. We refer readers to \citet{watson2023coherent,rafailov2024r,xie2024exploratory} for further details.} so that this assumption essentially plays the same role as bounded value functions in the analysis.

If the above assumptions hold, we have the following upper bound on the sample complexity of the DPO algorithm in terms of the state-action concentrability.
\begin{theorem}\label{thm: dpo}
    Suppose \cref{ass: dpo-realizability,ass: dpo-traj-coverage} hold, with probability at least $1 - \delta$, the output $\pihat$ of the DPO algorithm in \pref{eq: dpo} satisfies
    \begin{align*}
    \icml{&~} \Jbeta(\pi_\beta^\star) - \Jbeta(\hpi)
    \icml{\\}
    \lesssim \icml{&~} H^{3/2}\rmax e^{2\rmax}\cdot \sqrt{\frac{\sup_{\pi\in\Pi}\Cs(\pi, \pioff)\log(|\Pi|/\delta)}{|\calD|}}.
    \end{align*}
\end{theorem}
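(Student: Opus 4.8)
The plan is to reinterpret DPO as maximum-likelihood reward modeling over the \emph{implicit} reward class $\{r_\pi(\tau)\coloneqq\beta\log\tfrac{\pi(\tau)}{\piref(\tau)}:\pi\in\Pi\}$, so that the explicit-reward analysis of \pref{thm: preference} and the Change of Trajectory Measure Lemma (\pref{lem:orm-reward}) apply almost verbatim. Two observations make this work. First, under \pref{ass: dpo-realizability} the optimal policy satisfies $\pi_\beta^\star(\tau)\propto\piref(\tau)\exp(r^\star(\tau)/\beta)$, so $r_{\pi_\beta^\star}=r^\star-\beta\log Z^\star$ agrees with $r^\star$ up to an additive constant; since the Bradley--Terry model \pref{eq: bt-reward} depends only on reward \emph{differences}, the implicit class realizes the ground-truth preference model. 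Second, the DPO objective \pref{eq: dpo} is exactly the Bradley--Terry log-likelihood evaluated at $r=r_\pi$, so $\hr\coloneqq r_{\hpi}$ is precisely the maximum-likelihood estimate within the implicit class.

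I would first carry out a purely algebraic reduction of the regularized suboptimality to a reward-difference functional. Writing $J_{r,\beta}(\pi)\coloneqq\EE_{\tau\sim\pi}[r(\tau)]-\beta\KL(\pi\ \|\ \piref)$, the penalty term is reward-independent, so $J_{r^\star,\beta}(\pi)-J_{\hr,\beta}(\pi)=\EE_{\tau\sim\pi}[r^\star(\tau)-\hr(\tau)]$ for every $\pi$. Moreover $\hpi(\tau)=\piref(\tau)\exp(\hr(\tau)/\beta)$ is self-normalizing, hence $\hpi$ is the \emph{exact} Gibbs maximizer of $J_{\hr,\beta}$ over all policies, giving $J_{\hr,\beta}(\pi_\beta^\star)-J_{\hr,\beta}(\hpi)\le 0$. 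Inserting $\pm J_{\hr,\beta}$ and combining these two facts yields
\[
\Jbeta(\pi_\beta^\star)-\Jbeta(\hpi)\ \le\ \EE_{\tau\sim\pi_\beta^\star,\ \tau'\sim\hpi}\bigl[\,r^\star(\tau)-\hr(\tau)-r^\star(\tau')+\hr(\tau')\,\bigr],
\]
which is exactly the reward-difference quantity controlled by \pref{thm: preference} instantiated at $\pi=\pi_\beta^\star$ and $\pi'=\hpi$ (both in $\Pi$ by \pref{ass: dpo-realizability} and the definition of the algorithm).

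It then remains to invoke \pref{thm: preference} on the implicit class, for which I would verify its hypotheses and re-trace its internal mechanism. A standard log-loss concentration argument over the finite class $\Pi$ controls the squared Hellinger distance between the predicted and true Bradley--Terry conditionals under $\piref$ at rate $\log(|\Pi|/\delta)/|\calD|$; inverting the sigmoid via the lower bound on $\sigma'$ over the range $[-2\rmax,2\rmax]$ (valid since $|\hr(\tau)|\le\rmax$ by \pref{ass: dpo-traj-coverage} and $r^\star(\tau)\in[0,\rmax]$, and the source of the $e^{2\rmax}$ factor) gives $\EE_{\tau,\tau'\sim\piref}[(g(\tau)-g(\tau'))^2]\lesssim\rmax^2 e^{4\rmax}\log(|\Pi|/\delta)/|\calD|$ for $g\coloneqq r^\star-\hr$. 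Because the target is invariant to recentering $g$, I would replace $g$ by $\bar g\coloneqq g-\EE_{\tau\sim\piref}[g(\tau)]$, so that $\EE_{\piref}[\bar g(\tau)^2]=\tfrac12\EE_{\tau,\tau'\sim\piref}[(g(\tau)-g(\tau'))^2]$. The key structural point is that under the assumed \emph{deterministic} transitions both $r^\star(\tau)=\sum_h r^\star(s_h,a_h)$ and $\hr(\tau)=\beta\sum_h\log\tfrac{\hpi(a_h\mid s_h)}{\piref(a_h\mid s_h)}$ are additive over state--action pairs and $d^\pi(\tau)=\pi(\tau)$, so $\bar g$ is a bounded additive per-step function and \pref{lem:orm-reward} applies; running it under $\pi_\beta^\star$ and under $\hpi$ converts the $\piref$-variance control into $\EE_{\pi_\beta^\star}[|\bar g|]+\EE_{\hpi}[|\bar g|]\lesssim H^{3/2}\sqrt{\sup_{\pi\in\Pi}\Cs(\pi,\piref)\cdot\EE_{\piref}[\bar g^2]}$, and chaining the displays produces the stated rate (with $\piref$ playing the role of $\pioff$).

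The main obstacle is conceptual rather than computational: recognizing that the two deep ingredients---the self-consistency that makes $\hpi$ the \emph{exact} optimizer of its own implicit reward (so the optimization-error term vanishes) and the additivity of $\log\tfrac{\pi}{\piref}$ under deterministic dynamics (so trajectory-level change of measure costs only state-action concentrability through \pref{lem:orm-reward})---together collapse DPO onto the explicit-reward analysis. The remaining delicate points are the recentering step, needed because maximum likelihood identifies rewards only up to an additive constant, and careful tracking of the sigmoid-curvature factor over $[-2\rmax,2\rmax]$; notably, the exponential-in-horizon trajectory coverage is entirely confined to \pref{ass: dpo-traj-coverage} and never enters the final rate, which is precisely the source of the claimed improvement over trajectory-concentrability bounds.
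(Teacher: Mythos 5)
Your proposal is correct, and its skeleton matches the paper's: (a) reduce the regularized suboptimality $\Jbeta(\pi_\beta^\star)-\Jbeta(\hpi)$ to the pairwise reward-difference expectation $\EE_{\tau\sim\pi_\beta^\star,\tau'\sim\hpi}[r^\star(\tau)-\hr(\tau)-r^\star(\tau')+\hr(\tau')]$ using the optimality of $\hpi$ for its own implicit reward (the paper does this via \pref{lem: dpo-proof-1}, citing optimality within $\Pi$; your ``exact Gibbs maximizer'' observation is the same computation, if anything cleaner since it does not need $\pi_\beta^\star\in\Pi$ for that particular step); (b) control $\EE_{\tau,\tau'\simiid\piref}[(r^\star(\tau)-r^\star(\tau')-\hr(\tau)+\hr(\tau'))^2]$ by MLE/log-loss concentration with the $\rmax e^{2\rmax}$ sigmoid-curvature factor (the paper cites \pref{lem: dpo-proof-2} from \citet{xie2024exploratory}; you re-derive it). The one step where you genuinely diverge is the change of measure for the \emph{pairwise difference}: the paper's \pref{lem: dpo-proof-3} builds an auxiliary MDP of horizon $2H$ that concatenates the two trajectory draws (with reward $f$ on the first half and $-f$ on the second) so that \pref{lem:orm-reward} applies to a single trajectory of the doubled MDP, whereas you recenter $g=r^\star-\hr$ by its $\piref$-mean, use the variance identity $\EE_{\piref}[\bar g^2]=\tfrac12\EE_{\tau,\tau'\simiid\piref}[(g(\tau)-g(\tau'))^2]$, and apply \pref{lem:orm-reward} twice (once under $\pi_\beta^\star$, once under $\hpi$) with a triangle inequality. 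Both yield the same $H^{3/2}\sqrt{\Cs(\Pi,\piref)}$ rate; your recentering route is more elementary (no auxiliary construction, and it exploits that additive constants can be folded into one step's reward so $\bar g$ remains additive and \pref{lem:orm-reward} still applies by scale-invariance of its second form), while the paper's doubled-MDP lemma is packaged once and reused verbatim in the proof of \pref{thm: preference}. Your handling of the delicate points---realizability of the Bradley--Terry model by the implicit class only up to an additive constant, and the trajectory-level (rather than per-step) boundedness furnished by \pref{ass: dpo-traj-coverage}---is sound.
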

The proof of \pref{thm: dpo} is deferred to \cref{sec: dpo-proof}.
Note that, the exponential dependence on the reward range $\rmax$ is a fundamental characteristic of the Bradley-Terry model, not an artifact of our analysis. Indeed, this exponential dependence appears consistently across the theoretical literature on RLHF  \citep[e.g.,][]{zhu2023principled,zhan2023provable,rosset2024direct,xie2024exploratory,das2024provably}.

\section{Advantage Function Learning with Rollouts}\label{sec: advantage}

In this section, we analyze a common empirical strategy for converting outcome-supervised data into process supervision by leveraging online rollouts. The central observation is that, given access to an environment that returns final outcomes, one can initiate rollouts from individual state-action pairs and use the resulting outcomes to approximate their “quality.” Multiple works have adopted variations of this idea, relying on Q-functions \citep[e.g.,][]{wang2024math}, advantage functions \citep[e.g.,][]{setlur2024rewarding}, or other specialized value estimators \citep[e.g.,][]{luo2024improve}.

Although these methods have demonstrated empirical promise, their theoretical properties remain relatively unexplored. Establishing a theoretical foundation could reveal the assumptions and conditions under which these methods are effective and enable principled comparisons to alternative reward modeling approaches. In what follows, we present (to our knowledge) the first theoretical study of advantage-based reward learning with online rollouts. We show that the advantage function of \emph{any} policy can serve as a valid process-based reward model, recovering the same optimal policy as the original environment. By contrast, we also prove a lower bound indicating that simply using the Q-function can fail: in certain cases, the Q-function-based reward model produces suboptimal or undesired policies.

\subsection{Algorithm and Upper Bounds}

For MDP is $M = (\calS, \calA, P, r, H)$, suppose the transition model $P$ is known to the learner, but the reward model $r$ is unknown. For any given policy $\mu$, we define the advantage function $A^\mu: \calS\times\calA\to \RR$ as 
\icml{\begin{small}
\begin{equation}\label{eq: def-advantage-function}
    A^\mu(s, a) = Q_h^\mu(s, a) - V_h^\mu(s),\arxiv{\qquad}\icml{~} \forall h\in [H], s\in \calS_h, a\in \calA,
\end{equation}
\end{small}}
\arxiv{\begin{equation}\label{eq: def-advantage-function}
    A^\mu(s, a) = Q_h^\mu(s, a) - V_h^\mu(s),\arxiv{\qquad}\icml{~} \forall h\in [H], s\in \calS_h, a\in \calA,
\end{equation}}
where $Q_h^\mu$ and $V_h^\mu$ denote the $Q$-function and value function of policy $\mu$.

\citet{setlur2024rewarding} find an approximation $\hr$ to $A^\mu$ and then invoke a policy gradient algorithm for the MDP with transition model $P$ and reward function $\hr$, yielding a policy $\hpi$. We provide a theoretical guarantee for this approach by showing that the performance gap of $\hpi$ to the optimal policy can be upper bounded in terms of the error $\epsilon_{\sf stat}$ of approximating the advantage function and the error $\epsilon_{\sf alg}$ of optimizing the policy. Before stating the result, we define the concentrability coefficient with respect to distribution $\nu$,
    \begin{equation}\label{eq: def-C-nu}
        \Cs(\nu)\coloneqq \sup_{\pi} \sup_{s\in \calS, a\in \calA}\frac{d^\pi(s, a)}{\nu(s, a)},
    \end{equation}
    where the outer supremum is over all possible policies.

\begin{theorem}\label{thm: advantage}
    Suppose there exists some policy $\mu$ and distribution $\nu\in \Delta(\calS\times\calA)$ such that 
    \begin{equation}\label{eq: epsilon-stat}
        \EE_{(s, a)\sim \nu}\left[|\hr(s, a) - A^\mu(s, a)|^2\right]\le \epsilon_{\sf stat}.
    \end{equation}
    If policy $\pihat$ satisfies
    \begin{equation}\label{eq: epsilon-alg}
        \max_{\pi} J_{\hr}(\pi) - J_{\hr}(\pihat) \leq \varepsilon_{\sf alg},
    \end{equation}
    then it also satisfies
    $$\max_{\pi} J(\pi) - J(\pihat) \le 2H\sqrt{\Cs(\nu)}\cdot \epsilon_{\sf stat} + \epsilon_{\sf alg}.$$
\end{theorem}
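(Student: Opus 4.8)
The plan is to leverage the exact identity furnished by the performance difference lemma, which reveals that $A^\mu$ acts as an \emph{exact} reward model up to an additive constant. For any policy $\pi$, the performance difference lemma gives $J(\pi) - J(\mu) = \EE_{\tau\sim\pi}\bigl[\sum_{h=1}^H A^\mu(s_h,a_h)\bigr]$, and the right-hand side is precisely $J_{A^\mu}(\pi)$, the value of $\pi$ under the reward function $A^\mu$. Since $J(\mu)$ is independent of $\pi$, this identity already captures the conceptual heart of the theorem: $A^\mu$ and $r^\star$ induce the same ranking of policies, so the advantage function of \emph{any} policy $\mu$ is an optimal process reward model. Everything that remains is a perturbation argument that replaces the exact reward $A^\mu$ by its estimate $\hr$.

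The second step is to bound the gap $\bigl|J_{\hr}(\pi) - J_{A^\mu}(\pi)\bigr|$ uniformly over $\pi$. Writing this difference as $\EE_{\tau\sim\pi}\bigl[\sum_{h=1}^H (\hr - A^\mu)(s_h,a_h)\bigr]$ and applying the triangle inequality layer by layer reduces the task to controlling $\sum_{h=1}^H \EE_{(s_h,a_h)\sim d^\pi}\bigl[|\hr - A^\mu|\bigr]$. For each layer I would change measure from the layer-$h$ occupancy $d^\pi$ to the reference distribution $\nu$, splitting the likelihood ratio symmetrically as $d^\pi/\nu = \sqrt{d^\pi/\nu}\cdot\sqrt{d^\pi/\nu}$ and invoking Cauchy-Schwarz under $\nu$. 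One factor gives $\EE_{(s,a)\sim\nu}[d^\pi/\nu] = 1$, since the layer-$h$ occupancy is a probability measure; the other is bounded, via the pointwise estimate $d^\pi/\nu \le \Cs(\nu)$ and the hypothesis \pref{eq: epsilon-stat}, by $\Cs(\nu)\cdot\EE_{(s,a)\sim\nu}[|\hr - A^\mu|^2] \le \Cs(\nu)\,\epsilon_{\sf stat}$. This yields $\EE_{(s_h,a_h)\sim d^\pi}[|\hr - A^\mu|] \le \sqrt{\Cs(\nu)\,\epsilon_{\sf stat}}$ per layer, and hence $\bigl|J_{\hr}(\pi) - J_{A^\mu}(\pi)\bigr| \le H\sqrt{\Cs(\nu)\,\epsilon_{\sf stat}}$ for every $\pi$.

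The final step chains these pieces. Letting $\pi^\star \in \argmax_\pi J(\pi)$ and using that $J - J_{A^\mu} \equiv J(\mu)$ is a constant, I write $J(\pi^\star) - J(\pihat) = J_{A^\mu}(\pi^\star) - J_{A^\mu}(\pihat)$. Replacing each occurrence of $J_{A^\mu}$ by $J_{\hr}$ costs $2H\sqrt{\Cs(\nu)\,\epsilon_{\sf stat}}$ by the uniform bound just established, and the residual satisfies $J_{\hr}(\pi^\star) - J_{\hr}(\pihat) \le \max_\pi J_{\hr}(\pi) - J_{\hr}(\pihat) \le \epsilon_{\sf alg}$ by hypothesis \pref{eq: epsilon-alg}. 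Combining gives $\max_\pi J(\pi) - J(\pihat) \le \epsilon_{\sf alg} + 2H\sqrt{\Cs(\nu)\,\epsilon_{\sf stat}}$, as claimed.

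The main obstacle, and essentially the only place demanding care, is keeping the concentrability dependence at $\sqrt{\Cs(\nu)}$ rather than $\Cs(\nu)$: a direct change of measure bounding $d^\pi/\nu \le \Cs(\nu)$ inside the $L^1$ error would pay a full factor of $\Cs(\nu)$, whereas the symmetric Cauchy-Schwarz split trades one power of the ratio for the trivially normalized factor $\EE_\nu[d^\pi/\nu] = 1$. A secondary subtlety is the layered normalization: summed over all of $\calS\times\calA$, $d^\pi$ has total mass $H$ rather than $1$, which is why the per-layer decomposition (each layer contributing a probability measure) is essential and why the factor $H$ appears in the final bound. I note that \pref{eq: epsilon-stat} controls the \emph{squared} error by $\epsilon_{\sf stat}$, so the argument naturally produces the dependence $\sqrt{\epsilon_{\sf stat}}$.
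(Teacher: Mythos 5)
Your proof is correct and follows essentially the same route as the paper's: the performance difference lemma gives $J_{A^\mu}(\pi) = J(\pi) - J(\mu)$, a three-term decomposition inserts $J_{\hr}$, the middle term is controlled by \pref{eq: epsilon-alg}, and the two reward-error terms are each bounded by $H\sqrt{\Cs(\nu)\,\epsilon_{\sf stat}}$ via Cauchy--Schwarz plus the pointwise ratio bound $d^\pi/\nu \le \Cs(\nu)$ (your symmetric splitting of the likelihood ratio is algebraically the same step as the paper's Jensen-then-change-of-measure, and your per-layer bookkeeping is if anything slightly more careful). Your closing observation is also accurate: both your argument and the paper's own proof in fact yield $2H\sqrt{\Cs(\nu)\,\epsilon_{\sf stat}} + \epsilon_{\sf alg}$, i.e.\ a $\sqrt{\epsilon_{\sf stat}}$ dependence, so the $\epsilon_{\sf stat}$ appearing in the stated bound is a typo in the paper (the statement would be exact if \pref{eq: epsilon-stat} bounded the squared error by $\epsilon_{\sf stat}^2$).
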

The proof of \pref{thm: advantage} is deferred to \cref{sec: advantage-proof}.
Intuitively, the proof follows from the performance difference lemma \citep{kakade2002approximately}, which states that for any policies $\pi$ and $\mu$:
$J(\pi) - J(\mu) = H\cdot\E_{(s_h,a_h) \sim d^\pi}[ A^\mu(s_h,a_h)]$.
This implies that maximizing $J_{A^\mu}$ (treating $A^\mu$ as the reward function) is equivalent to maximizing $J$, since they differ only by the constant term $J(\mu)$. Therefore, both optimization problems yield the same optimal policy.

    There are several ways of obtaining an estimate $\hr$ of the advantage function $A^\mu$ that satisfies \pref{eq: epsilon-stat}. One commonly used approach is Monte-Carlo sampling, for instance as in  \citet{setlur2024rewarding}. In detail, this approach first collects a dataset $\calD$ of data $(s, a, \widehat{A})$, where $(s, a)\in \calS\times \calA$ and $\widehat{A}$ is calculated via rollout from $(s, a)$ under policy $\mu$, serving as an approximation to the advantage function of policy $\mu$ at $(s, a)$. Next, we fit a reward function $\hr$ in a reward class to the dataset $\calD$. Then, as long as the reward class realizes the ground truth advantage function $A^\mu$, the reward function $\hr$ satisfies \pref{eq: epsilon-stat} with high probability.

\subsection{Lower Bound on Failure of Using Q-Functions}
\pref{thm: advantage} indicates that the MDP with reward function set to be the advantage function of any policy $\mu$ has the same best policy as the original MDPs. One may wonder whether the same holds for the $Q$-function as well. In this section, we disprove this by providing a hard MDP with best policy $\pi^\star$, and a policy $\mu$, so that the best policy of the MDP with reward function $Q^\mu$ is not $\pi^\star$.
\begin{theorem}\label{thm: hard-case}
    There exists an MDP $M = (\calS, \calA, P, r, H)$, and a policy $\mu\in \calA^\calS$, such that 
    $$\max_{\pi}J_{r}(\pi) - J_{r}(\hpi)\ge \frac{1}{3},$$
    for
    $\hpi = \argmax_{\pi} J_{Q^\mu}(\pi).$
    Here $Q^\mu: \calS\times\calA\to \RR$ is the $Q$-function of MDP $M$.
\end{theorem}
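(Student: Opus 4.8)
The plan is to exploit the single structural difference between the advantage function and the $Q$-function. Writing $Q^\mu(s,a) = A^\mu(s,a) + V^\mu(s)$ and recalling from the performance difference lemma that $\EE_{\tau\sim\pi}[\sum_h A^\mu(s_h,a_h)] = J(\pi) - J(\mu)$, we obtain
\[
J_{Q^\mu}(\pi) = J(\pi) - J(\mu) + \EE_{\tau\sim\pi}\Big[\sum_{h=1}^H V^\mu(s_h)\Big].
\]
Thus maximizing $J_{Q^\mu}$ amounts to maximizing the true objective $J$ plus a spurious state-visitation bonus $\EE_{\tau\sim\pi}[\sum_h V^\mu(s_h)]$ that rewards $\pi$ for passing through states where $\mu$ happens to collect high return. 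The idea is to engineer an MDP in which the truly optimal trajectory passes through a state where $\mu$ is deliberately \emph{bad} (so its $V^\mu$ is small even though a large reward is reachable there), while a competing, genuinely suboptimal trajectory passes through a state where $\mu$ is \emph{good} (so its $V^\mu$ is large); the bonus then tips $\hpi$ toward the wrong branch.

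Concretely, I would take $H = 2$ with a fixed initial state $s_1$ and two actions $a_1, a_2$ leading deterministically to distinct layer-2 states $x$ and $y$ respectively, each with two actions. Set all layer-1 rewards to $0$, and at layer $2$ set $r(x, b_1) = 1$, $r(x, b_2) = 0$, $r(y, c_1) = 2/3$, $r(y, c_2) = 0$. Choose the deterministic policy $\mu$ to play the \emph{bad} action $b_2$ at $x$ and the \emph{good} action $c_1$ at $y$, so that $V^\mu(x) = 0$ and $V^\mu(y) = 2/3$. All one-step rewards and all trajectory returns lie in $[0,1]$, as required by the model.

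Then I would evaluate $Q^\mu$ and $J_{Q^\mu}$ over the four deterministic policies. Since $Q^\mu(s_1,a_1) = V^\mu(x) = 0$ and $Q^\mu(s_1,a_2) = V^\mu(y) = 2/3$, while at the last layer the $Q^\mu$-greedy action coincides with the reward-greedy one ($Q^\mu(x,b_1)=1$, $Q^\mu(y,c_1)=2/3$), the $Q^\mu$-return is $0 + 1 = 1$ along the $x$-branch but $2/3 + 2/3 = 4/3$ along the $y$-branch. Hence $\hpi = \argmax_\pi J_{Q^\mu}(\pi)$ is the unique policy playing $(a_2, c_1)$, whose \emph{true} return is $2/3$, whereas the reward-optimal policy $(a_1,b_1)$ achieves $1$. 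This gives $\max_\pi J_r(\pi) - J_r(\hpi) = 1 - 2/3 = 1/3$, proving the claim.

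The only thing requiring care—rather than a genuine obstacle—is the parameter balancing: the spurious bonus gap $V^\mu(y) - V^\mu(x)$ must strictly exceed the true reward advantage of the $x$-branch over the $y$-branch, while the $x$-branch must remain genuinely optimal. Since $V^\mu(y)$ is capped by the best reward reachable at $y$, this forces the two branches' optimal rewards to be comparable (roughly within a factor of two), which the values $1$ versus $2/3$ satisfy. One should also note that $H \ge 2$ is essential: for $H=1$ one has $Q^\mu = r$ on the final layer, so no discrepancy between $\argmax_\pi J_{Q^\mu}$ and $\argmax_\pi J_r$ can arise.
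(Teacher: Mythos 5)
Your proposal is correct and is essentially the paper's own proof: the same two-layer, two-branch deterministic MDP (optimal branch worth $1$, suboptimal branch worth $2/3$, with $\mu$ chosen to play badly on the optimal branch and well on the other), so that the $Q^\mu$-greedy policy is lured to the wrong branch and incurs a gap of exactly $1 - 2/3 = 1/3$. The differences are cosmetic—the paper sets the second reward at the suboptimal leaf to $1/2$ rather than $0$ and specifies $\mu$ at the initial state (which you may leave arbitrary, since $Q^\mu(s_1,\cdot)$ does not depend on it)—while your opening decomposition $J_{Q^\mu}(\pi) = J_r(\pi) - J_r(\mu) + \EE_{\tau\sim\pi}\bigl[\sum_{h} V^\mu(s_h)\bigr]$ is a nice explanatory touch the paper states only implicitly via its advantage lemma.
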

The proof of \pref{thm: hard-case} is deferred to \cref{sec: advantage-proof}. 

\pref{thm: hard-case} indicates that the widely used approach in large language model training, whereby an approximate $Q$-function is learned in place of the reward model, is theoretically incorrect and possibly outputs undesired policies. In contrast, using the advantage function as the reward model is theoretically justified.

\section{Related Work}\label{sec: app-related-work}

\paragraph{Process vs.~Outcome Supervision}
Our work is motivated by the empirical effectiveness of process supervision over outcome supervision, particularly in language model reasoning tasks \citep{cobbe2021training,uesato2022solving,lightman2023let}.
To address the challenges of cost and scalability in obtaining human-annotated process labels, recent approaches \citep{wang2024math,luo2024improve,setlur2024rewarding} have developed automated methods to generate process supervision from outcome-based signals, leveraging Q-functions and advantage functions under specific policies.
When data is provided in the form of preferences, outcome supervision is sometimes conducted with implicit rewards, as seen in works such as Direct Preference Optimization \citep{rafailov2023direct,lambert2024rewardbench,zhong2024dpo,yuan2024free}.

\paragraph{RL with Trajectory Feedback}
A closely related line of theoretical work is reinforcement learning with trajectory feedback or aggregate bandit feedback (or, bandit and semi-bandit feedback) \citep{neu2013efficient,efroni2021reinforcement,chatterji2021theory,chen2022human,cassel2024near,lancewicki2025near}, where the learner only receives trajectory-level feedback at the end of each episode. This line of work also includes preference-based RL \citep{pacchiano2021dueling,zhu2023principled,wu2023making,zhan2023provable}, which operates on trajectory-level pair preferences. While most existing works in this area focus on online exploration settings, our paper investigates offline learning and analyzes the statistical relationship between process (step-level) and outcome (trajectory-level) feedback.

\paragraph{Offline Reinforcement Learning}
Our work is most closely related to offline (batch) reinforcement learning in the classical reinforcement learning literature.
The paradigm of reinforcement learning with process-supervised data is essentially an offline RL problem, where a rich body of existing theoretical results \citep[e.g.,][]{munos2003error,antos2008learning,farahmand2010error,chen2019information,xie2020q,jin2021pessimism,xie2021batch,xie2021bellman,uehara2021pessimistic,cheng2022adversarially,xie2022armor,bhardwaj2023adversarial} can be applied to our paper, either directly for the process supervision case, or serving as subroutines in our \cref{alg:transfomration-2} for the outcome supervision case.
Within these results, \citet{chen2019information,xie2020q} develop model-free algorithms under all-policy coverage conditions, while \citet{xie2021batch} proposes a model-free approach requiring only realizability assumptions but with stronger coverage requirements. \citet{xie2021bellman,cheng2022adversarially} investigate model-free offline RL under partial coverage settings, and \citet{uehara2021pessimistic,xie2022armor,bhardwaj2023adversarial} address model-based offline RL with partial coverage.

\paragraph{Off-Policy Evaluation}
Our work also connects to the rich literature on off-policy evaluation (OPE) in reinforcement learning. A central challenge in OPE is the change of measure problem, where extensive research \citep{liu2018breaking,xie2019towards,nachum2019dualdice,uehara2020minimax} has investigated the significant distinction between state-action coverage and trajectory coverage conditions. These findings highlight the significance of our main results, particularly our change of trajectory measure lemma.

\paragraph{Reward Shaping and Internal Rewards}
A related but distinct line of research focuses on augmenting sparse reward functions to improve learning efficiency. Reward shaping techniques have been extensively employed as a method for providing denser learning signals while preserving optimal policies \citep[e.g.,][]{ng1999policy,trott2019keeping,gupta2022unpacking}. Similarly, intrinsic rewards based on prediction errors of environment dynamics \citep{pathak2017curiosity} or random networks \citep{burda2018exploration} have been proposed to tackle sparse reward settings. However, these approaches differ fundamentally from our work in their objectives -- while reward shaping and intrinsic rewards aim to improve exploration in online RL by modifying the reward landscape, our analysis focuses on the statistical properties of learning in the offline setting where the data distribution is fixed.

\section{Conclusion}
In this paper, we present a way of transforming data in the setting of outcome supervision into the data in the setting of process supervision. This transformation enables us to design offline algorithms in the outcome supervision model from the large pool of algorithms that use process supervision. This transformation extends to preference-based algorithms such as DPO. Beyond this transformation, we also provide theoretical guarantees for algorithms using an approximate advantage function as the reward function.

While our transformation scheme works for most of the offline algorithms, the theoretical guarantees require that the outcome supervision data are collected offline. How to construct similar transformation for online data or online algorithms is left for future work.

\icml{\section*{Impact Statement}
This work contributes to advancing the field of Machine Learning through theoretical analysis and insights. While our research focuses on foundational understanding rather than direct applications, we acknowledge that theoretical advancements in machine learning can have broad societal implications. However, given the theoretical nature of this work, we believe these potential impacts do not warrant specific discussion in this context.}
\arxiv{
\section*{Acknowledgements}
ZJ and AR acknowledge support of the Simons Foundation and the NSF through awards DMS-2031883 and PHY-2019786, as well as from the ARO through award W911NF-21-1-0328.
}

\bibliographystyle{plainnat}
\bibliography{ref}

\clearpage

\appendix
\onecolumn
\allowdisplaybreaks

\begin{center}
{\LARGE Appendix}
\end{center}

\arxiv{
\renewcommand{\contentsname}{}
\addtocontents{toc}{\protect\setcounter{tocdepth}{2}}
{\hypersetup{hidelinks}
\tableofcontents
}
}

\section{Missing Proofs in \pref{sec: orm-reward}}\label[appendix]{sec: app-orm-reward}
\subsection{Proof of Change of Trajectory Measure Lemma}\label[appendix]{app: app-orm-reward-change}
\begin{proof}[\cpfname{lem:orm-reward}]
    In the following proof, for any trajectory $\tau = (s_1, a_1, s_2, a_2, \cdots, s_H, a_H)$ and $1\le u \le v\le H$, we use $\tau_{u:v}$ to denote the partial trajectory $(s_u, a_u, s_{u+1}, a_{u+1}, \cdots, s_v, a_v)$. We use $f(\tau_{u:v})\coloneqq \sum_{h=u}^v f(s_h, a_h)$ to denote the cumulative reward  in this segment. Without loss of generality we assume for any trajectory $\tau$, $f(\tau)\in [-1, 1]$. Since the state space of the MDP is layered, we write $\{s_h\in \tau\}$ for $s_h\in \calS_h$ to denote the event that $s_h$ is the time-$h$ element of the trajectory $\tau$. We let 
    $$L(\eta) = \bP_{\tau\sim \pioff}\left[|f(\tau)|\ge \eta\right]$$ 
    for every $\eta > 0$. For every real number $r\in \RR$ and $\eta, p > 0$, we define the following sets:
    \begin{align*} 
        \calS_h^{\uparrow}(r, \eta, p) &\coloneqq \left\{s_h\in \calS_h: \bP_{\tau\sim \pioff}\left(\left|r - f(\tau_{h: H})\right|\le \eta\ \Big{|}\ s_h\in \tau\right)\ge 1 - p\right\},\\
        \text{and}\quad\calS_h^{\downarrow}(r, \eta, p) &\coloneqq \left\{s_h\in \calS_h: \bP_{\tau\sim \pioff}\left(\left|r - f(\tau_{1:h-1})\right|\le \eta \ \Big{|}\ s_h\in \tau\right)\ge 1 - p\right\}.
    \end{align*}
    Intuitively, $\calS_h^\downarrow(r, \eta, p)$ denotes subset of states in $\calS_h$ where, conditionally on arriving at that state under policy $\pioff$, with probability at least $1-p$, the total reward collected in the first $(h - 1)$ steps is $\eta$-close to $r$. Similarly, $\calS_h^\uparrow(r, \eta, p)$ denotes subset of states in $\calS_h$ where, conditionally on arriving at that state under policy $\pioff$, the probability that the total reward collected in the last $(H - h + 1)$ steps is $\eta$-close to $r$ is at least $1-p$.

    We further define sets
    \[\calS_h^\uparrow(\eta, p) = \cup_{r\in \RR} \calS_h^\uparrow(r, \eta, p),\quad \calS_h^\downarrow(\eta, p) = \cup_{r\in \RR} \calS_h^\downarrow(r, \eta, p).\]
    We can now upper bound the occupancy measure of those states outside the set $\calS_h^\uparrow(\eta, p)$ or $\calS_h^\downarrow(\eta, p)$. This property is summarized in the following claim:

    \paragraph{Claim 1} We have the following upper bounds on the occupancy measure outside $\calS_h^\uparrow (\eta, p)$ and  $\calS_h^\downarrow(\eta, p)$:
    \begin{equation}\label{eq:s-up-down}
        \bP_{\tau\sim \pioff}(\tau \cap \calS_h \not\subset \calS_h^\uparrow(\eta, p))\le \frac{L(\eta)}{p}\quad\text{and}\quad \bP_{\tau\sim \pioff}(\tau\cap \calS_h\not\subset  \calS_h^\downarrow(\eta, p))\le \frac{L(\eta)}{p}.
    \end{equation}

    \paragraph{Proof} We only prove the first inequality, as the proof of the second inequality is similar. Notice that for any state $s_h\in \calS_h\backslash \calS_h^\uparrow(\eta, p)$, according to the definition of $\calS_h^\uparrow(\eta, p)$ we have for any $r\in \RR$,
    \[\bP_{\tau\sim \pioff}\left(\left|r - f(\tau_{h:H})\right|\le \eta\ \Big{|}\ s_h\in \tau\right) < 1 - p.\]
    According to the Markov property, when sampling $\tau\sim \pioff$, $\tau_{1:h-1}\indep \tau_{h:H}$ conditioned on $s_h \in \tau$, which implies
    \begin{align*} 
        &\hspace{-0.5cm} \bP_{\tau\sim \pioff}\left(\left|f(\tau)\right|\le \eta\ \Big{|}\ s_h\in \tau\right)\\
        & = \EE_{\tau\sim \pioff}\left[\bP_{\tau\sim \pioff}\left[\left|f(\tau_{h:H}) - \left(-f(\tau_{1:h-1})\right)\right|\le \eta\ \Big{|}\ \tau_{1:h-1}, s_h\in \tau\right]\ \Big{|}\  s_h\in \tau\right]\le 1 - p.
    \end{align*}
    Hence we have
    \[L(\eta) = \bP_{\tau\sim \pioff}\left[|f(\tau)|\ge \eta\right]\ge \bP_{\tau\sim \pioff}(\tau\cap \calS_h\not\subset \calS_h^\uparrow(\eta, p))\cdot p,\]
    which implies the first inequality of \pref{eq:s-up-down}. The second inequality of \pref{eq:s-up-down} follows similarly. \qed
    
    Next, for every real number $r\in \RR$ and $\eta, p > 0$, we define the set 
    \begin{align*} 
        \calS_h(r, \eta, p) &\coloneqq \left\{s_h\in \calS_h: \bP_{\tau\sim \pioff}\left(\left|r - f(\tau_{1:h-1})\right|\le \eta\  \text{and}\  \left|r + f(\tau_{h: H})\right|\le \eta\ \Big{|}\ s_h\in \tau\right)\ge 1 - p\right\}.
    \end{align*}
    This set denotes subset of states in $\calS_h$ where conditioned on arriving at that state under policy $\pioff$, the probability that the first $(h-1)$ steps' total reward is $\eta$-close to $r$ and also the last $(H-h+1)$ steps' total reward is $\eta$-close to $-r$ is less than $p$. We further define the set
    \[\calS_h(\eta, p) = \cup_{r\in \RR} \calS_h(r, \eta, p),\]
    Then we have the following claim, which shows that the occupancy measure of states outside $\calS_h(\eta, p)$ is also upper bounded:
    \paragraph{Claim 2} We have the following upper bounds on the occupancy measure outside $\calS_h(\eta, p)$:
    \begin{equation}\label{eq:s-total}
        \bP_{\tau\sim \pioff}(\tau\cap \calS_h\not\in \calS_h(\eta, p))\le \frac{L(2\eta/3)}{1-p} + \frac{4L(\eta/3)}{p}.
    \end{equation}

    \paragraph{Proof} For state $s_h\in \calS_h\backslash \calS_h(\eta, p)$ but $s_h\in \calS_h^\uparrow(\eta/3, p/2)\cap \calS_h^\downarrow(\eta/3, p/2)$, there exists some $r^\uparrow(s_h)\in \RR$ and $r^\downarrow(s_h)\in \RR$ such that
    \begin{align*} 
        \bP_{\tau\sim \pioff}\left(\left|f(s_h)^\downarrow - f(\tau_{1:h-1})\right|\le \frac{\eta}{3}\ \Big{|}\ s_h\in \tau\right) \ge 1 - \frac{p}{2}\ \ \text{and}\ \ \bP_{\tau\sim \pioff}\left(\left|f(s_h)^\uparrow - f(\tau_{h:H})\right|\le \frac{\eta}{3}\ \Big{|}\ s_h\in \tau\right) \ge 1 - \frac{p}{2}.
    \end{align*}
    By union bound we have that for any such $s_h$,
    \[\bP_{\tau\sim \pioff}\left(\left|r^\downarrow(s_h) - f(\tau_{1:h-1})\right|\le \frac{\eta}{3}\quad  \text{and}\quad  \left|r^\uparrow(s_h) - f(\tau_{h:H})\right|\le \frac{\eta}{3}\ \Big{|}\ s_h\in \tau\right)\ge 1 - p.\]
    If for some $s_h\in \calS_h\backslash \calS_h(\eta, p)$, we have $|r^\downarrow(s_h) + r^\uparrow(s_h)|\le \frac{4\eta}{3}$, then by letting $r = \frac{r^\downarrow(s_h) - r^\uparrow(s_h)}{2}$, we obtain
    \begin{align*} 
        &\hspace{-0.5cm} \bP_{\tau\sim \pioff}\left(\left|r - f(\tau_{1:h-1})\right|\le \eta\  \text{and}\  \left|r + f(\tau_{h:H})\right|\le \eta\ \Big{|}\ s_h\in \tau\right)\\
        & \ge \bP_{\tau\sim \pioff}\left(\left|r^\downarrow(s_h) - f(\tau_{1:h-1})\right|\le \frac{\eta}{3}\  \text{and}\  \left|r^\uparrow(s_h) - f(\tau_{h:H})\right|\le \frac{\eta}{3}\ \Big{|}\ s_h\in \tau\right)\\
        & \ge 1 - p.
    \end{align*}
    This contradicts the definition of $\calS_h\backslash \calS_h(\eta, p)$. Hence for any $s_h\in (\calS_h\backslash \calS_h(\eta, p))\cap (\calS_h^\uparrow(\eta/3, p/2)\cap \calS_h^\downarrow(\eta/3, p/2))$, we always have $|r^\downarrow(s_h) + r^\uparrow(s_h)| > \frac{4\eta}{3}$, which implies that
    \begin{align*} 
        \bP_{\tau\sim \pioff}\left(\left|f(\tau)\right| \ge \frac{2\eta}{3}\ \Big{|}\ s_h\in \tau\right) &\ge \bP_{\tau\sim \pioff}\left(\left|r^\downarrow(s_h) - f(\tau_{1:h-1})\right|\le \frac{\eta}{3}\  \text{and}\  \left|r^\uparrow(s_h) - f(\tau_{h:H})\right|\le \frac{\eta}{3}\ \Big{|}\ s_h\in \tau\right)\\
        & \ge 1 - p.
    \end{align*}
    Further notice that
    \begin{align*} 
        L(2\eta/3) & = \bP_{\tau\sim \pioff}\left(|f(\tau)|\ge 2\eta/3\right)\\
        & \ge \bP_{\tau\sim \pioff}(\tau\cap \calS_h\subset  (\calS_h\backslash \calS_h(\eta, p))\cap (\calS_h^\uparrow(\eta/3, p/2)\cap \calS_h^\downarrow(\eta/3, p/2)))\cdot (1 - p).
    \end{align*}
    Hence we obtain
    \[\bP_{\tau\sim \pioff}(\tau\cap \calS_h\subset (\calS_h\backslash \calS_h(\eta, p))\cap (\calS_h^\uparrow(\eta/3, p/2)\cap \calS_h^\downarrow(\eta/3, p/2)))\le \frac{L(2\eta/3)}{1-p}.\]
    Additionally, according to our previous claim, we have 
    \[\bP_{\tau\sim \pioff}(\tau\cap\calS_h\subset \calS_h^\uparrow(\eta/3, p/2))\le \frac{2L(\eta/3)}{p}\quad\text{and}\quad \bP_{\tau\sim \pioff}(\tau\cap \calS_h\subset \calS_h^\downarrow(\eta/3, p/2))\le \frac{2L(\eta/3)}{p}.\]
    Combining the above three inequalities, we obtain that 
    \[\bP_{\tau\sim \pioff}(\tau\cap\calS_h\not\subset \calS_h(\eta, p))\le \frac{L(2\eta/3)}{1-p} + \frac{4L(\eta/3)}{p},\]
    and \pref{eq:s-total} is verified. \qed

    We next define the following sets of ``good'' state-action-state tuples:
    \begin{align*} 
        \calU_h & \coloneqq \{(s_h, a_h, s_{h+1}): s_h\in \calS_h, a_h\in \calA_h, s_{h+1}\in \calS_{h+1},\\
        & \qquad \exists r, r'\in \RR\text{ such that } s_h\in \calS_h(r, \eta, p), s_{h+1}\in \calS_{h+1}(r', \eta, p)\text{ and }\left|f(s_h, a_h) + r - r'\right|\le 3\eta\}
    \end{align*}
    and also `bad' state-action-state tuples:
    \begin{align*} 
        \calU_h' & \coloneqq \{(s_h, a_h, s_{h+1}): s_h\in \calS_h, a_h\in \calA_h, s_{h+1}\in \calS_{h+1},\\
        & \qquad \forall r, r'\in \RR\text{ such that }s_h\in \calS_h(r, \eta, p), s_{h+1}\in \calS_{h+1}(r', \eta, p)\text{ and }\left|f(s_h, a_h) + r - r'\right|\ge 3\eta\}.
    \end{align*}
    We will show that under policy $\pioff$, the encountered state-action-state pairs are `good', i.e., belong to $\calU_h$, with high probability. Notice that the complement of set $\calU_h$ satisfies
    \begin{equation}\label{eq:u-subset}
        (\calU_h)^c\subset\ \calU_h'\cup \{(s_h, a_h, s_{h+1}): s_h\in \calS_h\backslash \calS_h(\eta, p)\}\cup \{(s_h, a_h, s_{h+1}): s_{h+1}\in \calS_{h+1}\backslash \calS_{h+1}(\eta, p)\},
    \end{equation}
    and according to the last claim we have 
    \begin{align*} 
        & \bP_{\tau\sim \pioff}(\tau\cap\calS_h\not\subset \calS_h(\eta, p))\le \frac{L(2\eta/3)}{1-p} + \frac{4L(\eta/3)}{p}\\
        &\hspace{-0.5cm}\text{and}\quad \bP_{\tau\sim \pioff}(\tau\cap\calS_{h+1}\not\subset \calS_{h+1}(\eta, p))\le \frac{L(2\eta/3)}{1-p} + \frac{4L(\eta/3)}{p}. \numberthis \label{eq:s-l}
    \end{align*}
    We next upper bound $\bP_{\tau\sim \pioff}(\tau\cap \calU_h'\neq \emptyset)$, which is summarized into the following claim.
    \paragraph{Claim 3} We have the following upper bounds on the occupancy measure of $\calU_h'$:
    \begin{equation}\label{eq:u}
        \bP_{\tau\sim \pioff}(\tau\cap \calU_h'\neq \emptyset)\le \frac{L(\eta)}{1-2p}
    \end{equation}

    \paragraph{Proof} According to the Markov property, we have 
    \begin{align*} 
        & (s_1, a_1, \cdots, s_{h-1}, a_{h-1}) \indep (a_h, s_{h+1})\quad\text{conditioned on }s_h\\
        & \hspace{-0.5cm} \text{and}\quad (s_{h+1}, a_{h+1}, \cdots, s_{H}, a_{H}) \indep (s_h, a_h)\quad\text{conditioned on }s_{h+1}.
    \end{align*}
    Hence for any $(s_h, a_h, s_{h+1})\in \calU_h'$, there exists $r$ and $r'$ such that $s_h\in \calS_h(r, \eta, p)$ and $s_{h+1}\in \calS_{h+1}(r', \eta, p)$, which implies
    \begin{align*}
        \bP_{\tau\sim \pioff}\left(\left|f(\tau_{1:h-1}) - r\right|\le \eta\ \Big{|}\ (s_h, a_h, s_{h+1})\in \tau\right) = \bP_{\tau\sim \pioff}\left(\left|f(\tau_{1:h-1}) - r\right|\le \eta\ \Big{|}\ s_h\in \tau\right)\ge 1- p
    \end{align*}
    and 
    \begin{align*}
        \bP_{\tau\sim \pioff}\left(\left|f(\tau_{h:H}) + r'\right|\le \eta\ \Big{|}\ (s_h, a_h, s_{h+1})\in \tau\right) = \bP_{\tau\sim \pioff}\left(\left|f(\tau_{h:H}) + r'\right|\le \eta\ \Big{|}\ s_{h+1}\in \tau\right)\ge 1- p.
    \end{align*}
    By union bound, we obtain
    \[\bP_{\tau\sim \pioff}\left(\left|f(\tau_{1:h-1}) - r\right|\le \eta\text{ and }\left|f(\tau_{h:H}) + r'\right|\le \eta\ \Big{|}\ (s_h, a_h, s_{h+1})\in \tau\right)\ge 1 - 2p.\]
    Additionally $(s_h, a_h, s_{h+1})\in \calU_h'$ implies $\left|f(s_h, a_h) + r - r'\right|\ge 3\eta$. Therefore, for any $(s_h, a_h, s_{h+1})\in \calU_h'$,
    \[\bP_{\tau\sim \pioff}\left(\left|f(\tau)\right|\ge \eta\ \Big{|}\ (s_h, a_h, s_{h+1})\in \tau\right)\ge 1 - 2p.\]
    This leads to
    \[L(\eta) = \bP_{\tau\sim \pioff}\left(|f(\tau)|\ge \eta\right)\ge \bP_{\tau\sim \pioff}\left(\tau\cap \calU_h'\neq\emptyset\right)\cdot (1-2p),\]
    and \pref{eq:u} follows. \qed

    Combining \pref{eq:u} and the two inequalities in \pref{eq:s-l}, and in view of \pref{eq:u-subset}, we obtain
    \[\bP_{\tau\sim \pioff}(\tau\cap \calU_h = \emptyset)\le \frac{2L(2\eta/3)}{1-p} + \frac{8L(\eta/3)}{p} + \frac{L(\eta)}{1 - 2p}.\]
    Next notice from the definition of state-action concentrability, we have for any policy $\pi$ and layer $h\in [H]$,
    \begin{align*} 
        &\hspace{-0.5cm}\sup_{s_h\in \calS_h, a_h\in \calA, s_{h+1}\in \calS_{h+1}} \frac{d^\pi(s_h, a_h, s_{h+1})}{d^{\pioff}(s_h, a_h, s_{h+1})}\\
        & = \sup_{s_h\in \calS_h, a_h\in \calA, s_{h+1}\in \calS_{h+1}} \frac{d^\pi(s_h, a_h)\cdot T(s_{h+1}\mid s_h, a_h)}{d^{\pioff}(s_h, a_h)\cdot T(s_{h+1}\mid s_h, a_h)}\\
        & = \sup_{s_h\in \calS_h, a_h\in \calA} \frac{d^\pi(s, a)}{d^{\pioff}(s, a)}\le \Cs(\pi, \pioff),
    \end{align*}
    which implies that for any policy $\pi$ and layer $h\in [H]$, 
    \begin{align*} 
        \bP_{\tau\sim \pi}(\tau\cap \calU_h = \emptyset) & = \sum_{(s_h, a_h, s_{h+1})\in (\calS_h\times\calA\times\calS_{h+1})\backslash \calU_h} d^\pi(s_h, a_h, s_{h+1})\\ 
        & \le \Cs(\pi, \pioff)\cdot \sum_{(s_h, a_h, s_{h+1})\in (\calS_h\times\calA\times\calS_{h+1})\backslash \calU_h} d^{\pioff}(s_h, a_h, s_{h+1})\\
        & = \Cs(\pi, \pioff)\cdot \bP_{\tau\sim \pioff} (\tau\cap \calU_h = \emptyset)\\
        &\le \Cs(\pi, \pioff)\cdot\left(\frac{2L(2\eta/3)}{1-p} + \frac{8L(\eta/3)}{p} + \frac{L(\eta)}{1 - 2p}\right).
    \end{align*}
    Hence by union bound, we have for any policy $\pi$, 
    \[\bP_{\tau\sim \pi}(\tau\cap \calU_h\neq \emptyset,\ \forall h\in [H])\ge 1 - H\Cs(\pi, \pioff)\cdot\left(\frac{2L(2\eta/3)}{1-p} + \frac{8L(\eta/3)}{p} + \frac{L(\eta)}{1 - 2p}\right).\]

    Finally, we have the last claim showing that if for all $h\in [H]$, $(s_h, a_h, s_{h+1})\in \calU_h$, then the total reward of the trajectory can be upper bounded.
    
    \paragraph{Claim 4} For any trajectory $\tau = (s_1, a_1, \cdots, s_H, a_H)$ where $(s_h, a_h, s_{h+1})\in \calU_h$ for every $h$, we have 
    \[\left|f(\tau)\right|\le 5H\eta.\]
    \paragraph{Proof} We define tuple $u_h = (s_h, a_h, s_{h+1})$ for $h\in [H]$. According to the definition of $\calU_h$, there exist real numbers $r(u_h)\in \RR$ and $r'(u_{h+1})\in \RR$ such that for any $h\in [H]$,
    \[s_h\in \calS_h(r(u_h), \eta, p),\quad s_{h+1}\in \calS_h(r'(u_h), \eta, p),\quad \text{and}\quad |f(s_h, a_h) + r(u_h) - r'(u_h)|\le 3\eta.\]
    Compare the condition on $s_h\in \calS_h(r(u_h), \eta, p)$ with $s_h\in \calS_h(r'(u_{h-1}), \eta, p)$, we have
    \[\bP_{\tau\sim \pioff}\left(\left|r(u_h) - f(\tau_{1:h-1})\right|\le \eta \ \Big{|}\ s_h\in \tau\right)\ge 1 - p\ \text{ and }\ \bP_{\tau\sim \pioff}\left(\left|r'(u_{h-1}) - f(\tau_{1:h-1})\right|\le \eta \ \Big{|}\ s_h\in \tau\right)\ge 1 - p.\]
    When $p < 1/2$, by union bound we obtain 
    $$\bP_{\tau\sim \pioff}\left(\left|r(u_h) - f(\tau_{1:h-1})\right|\le \eta \text{ and }\left|r'(u_{h-1}) - f(\tau_{1:h-1})\right|\le \eta\ \Big{|}\ s_h\in \tau\right)\ge 1 - 2p > 0,$$
    which implies that there exists a trajectory $\tau$ such that $\left|r(u_h) - f(\tau_{1:h-1})\right|\le \eta$ and $\left|r'(u_{h-1}) - f(\tau_{1:h-1})\right|\le \eta$ both hold. Hence we obtain
    \[|r(u_h) - r'(u_{h-1})|\le |r(u_h) - f(\tau_{1:h-1})| + |f(\tau_{1:h-1}) - r'(u_{h-1})|\le \eta + \eta = 2\eta.\]
    Therefore, we obtain that 
    \begin{align*}
        |f(\tau)| = \left|\sum_{h=1}^H f(s_h, a_h)\right|\le 3H\eta + \left|\sum_{h=1}^H \{r'(u_h) - r(u_h)\}\right|\le 5H\eta + |r'(u_H) - r(u_1)| = 5H\eta,
    \end{align*}
    where the last equality uses the fact that $s_{H+1}$ is the notational terminal state and $s_1$ is in the first layer. \qed

    According to the previous proofs, we obtain that 
    \[\bP_{\tau\sim \pi}\left(\left|f(\tau)\right|\ge 5H\eta\right)\le H\Cs(\pi, \pioff)\cdot\left(\frac{2L(2\eta/3)}{1-p} + \frac{8L(\eta/3)}{p} + \frac{L(\eta)}{1 - 2p}\right).\]
    By choosing $p = 1/3$ and replacing $\eta$ by $\eta/(5H)$, we have
    \begin{align*} 
        \bP_{\tau\sim \pi}\left(\left|f(\tau)\right|\ge \eta\right) & \le H\Cs(\pi, \pioff)\cdot \left(6L(2\eta/(15H)) + 24L(\eta/(15H)) + 3L(\eta/(5H))\right)\\
        & \stackrel{(i)}{\le} 33H\Cs(\pi, \pioff)\cdot L(\eta/(15H)) = 33H\Cs(\pi, \pioff)\cdot \bP_{\tau\sim \pioff}\left(\left|f(\tau)\right|\ge \eta/(15H)\right),
    \end{align*}
    where $(i)$ uses the fact that $L(\eta)$ is monotonically decreasing with $\eta$. Hence we obtain
    \begin{align*} 
        \EE_{\tau\sim \pi}\left[f(\tau)^2\right] & \stackrel{(i)}{=} \int_{0}^1 2\eta\cdot \bP_{\tau\sim \pi}\left(\left|f(\tau)\right|\ge \eta\right)d\eta\\
        & \le \int_0^1 2\eta\cdot 33H\Cs(\pi, \pioff)\cdot \bP_{\tau\sim \pioff}\left(\left|f(\tau)\right|\ge \eta/(15H)\right)d\eta\\
        & \le \int_0^{15H} 2\eta\cdot 33H\Cs(\pi, \pioff)\cdot \bP_{\tau\sim \pioff}\left(\left|f(\tau)\right|\ge \eta/(15H)\right)d\eta\\
        & = 7425H^3\Cs(\pi, \pioff)\cdot \int_{0}^1 2(\eta/(15H))\cdot \bP_{\tau\sim \pioff}\left(\left|f(\tau)\right|\ge \eta/(15H)\right)d(\eta/(15H))\\
        & = 7425H^3\Cs(\pi, \pioff)\cdot \EE_{\tau\sim \pioff}[f(\tau)^2],
    \end{align*}
    where $(i)$ uses integration by parts and also the assumption that for any trajectory $\tau$, $f(\tau)\in [-1, 1]$. Additionally, we upper bound the expected total reward under policy $\pi$ with Cauchy-Schwarz inequality: 
    \begin{align*}
        \EE_{\tau\sim \pi}[|f(\tau)|] \le \sqrt{\EE_{\tau\sim \pi}[f(\tau)^2]} \lesssim \sqrt{H^3\Cs(\pi, \pioff)\cdot \EE_{\tau\sim \pioff} \left[f(\tau)^2\right]}.
    \end{align*}
\end{proof}

\subsection{Missing Proofs in \pref{sec: transform}}\label[appendix]{sec: app-transform}

\begin{proof}[\cpfname{thm: orm-prm-transformation}]
    For any reward model $r$, we use $r^\star[r] = r - r^\star$ to denote the difference between reward model $r$ and $r^\star$. Then we have 
    $$J_{r^\star[r]}(\pi) = J_{r}(\pi) - J(\pi),$$
    For any reward $r\in \calR$, we also have
    $$\sum_{\tau\in \calD_\orm}\left(r(\tau) - r^\star(\tau)\right)^2 = \sum_{\tau\in \calD_\orm}\left(r^\star[r](\tau)\right)^2.$$
    We notice that according to our assumption on the MDPs, for any trajectory $\tau$ and reward model $r$, $r(\tau)\in [0, 1]$, hence $r^\star[r](\tau)\in [-1, 1]$. According to \citet[Lemma A.3]{foster2021statistical} and union bound over $\calR$, with probability $1 - p$ we have for any $r\in \calR$,
    \begin{align*}
        \left|\frac{1}{|\calD_\orm|}\sum_{\tau\in \calD_\orm}\left(r^\star[r](\tau)\right)^2 - \EE_{\tau\sim \pioff}\left[\left(r^\star[r](\tau)\right)^2\right]\right| \le \frac{1}{2}\cdot \EE_{\tau\sim \pioff}\left[\left(r^\star[r](\tau)\right)^2\right] + \frac{4\log(2|\calR|/p)}{|\calD_\orm|}. \numberthis \label{eq: r-0-formula} 
    \end{align*}
    When \pref{eq: r-0-formula} holds for all $r\in \calR$, since $\hr$ is the solution of optimization problem \pref{eq: solution-r-bar}, we have 
    \begin{align*} 
        \EE_{\tau\sim \pioff}\left[\left(r^\star[\hr](\tau)\right)^2\right] & \le \frac{2}{|\calD_\orm|}\sum_{\tau\in \calD_\orm}\left(r^\star[\hr](\tau)\right)^2  + \frac{8\log(2|\calR|/p)}{|\calD_\orm|}\\
        & \le \frac{2}{|\calD_\orm|}\sum_{\tau\in \calD_\orm}\left(r^\star[r^\star](\tau)\right)^2  + \frac{8\log(2|\calR|/p)}{|\calD_\orm|}\\
        & \le 3\EE_{\tau\sim \pioff}\left[\left(r^\star[r^\star](\tau)\right)^2\right] + \frac{16\log(2|\calR|/p)}{|\calD_\orm|}\\
        & = \frac{16\log(2|\calR|/p)}{|\calD_\orm|},
    \end{align*}
    where the last equation uses $r^\star[r^\star] = r^\star - r^\star = 0$. Therefore, according to \pref{lem:orm-reward}, with probability at least $1 - p$ we have for any policy $\pi$,
    $$|J_{\hr}(\pi) - J(\pi)| = |J_{r^\star[\hr]}(\pi)|\lesssim \sqrt{H^3\cdot \Cs(\pi, \pioff)\cdot  \EE_{\tau\sim \pioff}\left[\left(r^\star[\hr](\tau)\right)^2\right]}\lesssim H^{3/2}\cdot \sqrt{\frac{\Cs(\pi, \pioff)\cdot \log(|\calR|/p)}{|\calD_\orm|}}.$$
\end{proof}

\begin{proof}[\cpfname{corr: alg-transform}]
    Suppose $\pi^\star$ to be the best policy of the true MDP $M^\star$. By \pref{thm: orm-prm-transformation}, with probability at least $1 - \delta$, for any policy $\pi$ we have 
    \[
    \left|J(\pi) - J_{\hr}(\pi)\right|\lesssim \sqrt{\frac{H^3\Cs(\pi, \pioff)\cdot \log(|\Rcal|/\delta)}{|\calD_1|}}.
    \]
    Especially since $\pi^\star, \pihat\in \Pi$, we have
    \[
    \left|J(\pi^\star) - J_{\hr}(\pi^\star)\right|\lesssim \sqrt{\frac{H^3\Cs(\pi^\star, \pioff)\cdot \log(|\calR|/\delta)}{|\calD_1|}}\le \sqrt{\frac{H^3\sup_{\pi\in \Pi}\Cs(\pi^\star, \pioff)\cdot \log(|\calR|/\delta)}{|\calD_1|}}
    \]
    and also 
    $$\left|J(\pihat) - J_{\hr}(\pihat)\right|\lesssim \sqrt{\frac{H^3\Cs(\pihat, \pioff)\cdot \log(|\calR|/\delta)}{|\calD_1|}}\le \sqrt{\frac{H^3\sup_{\pi\in \Pi}\Cs(\pi^\star, \pioff)\cdot \log(|\calR|/\delta)}{|\calD_1|}}.$$
    Next, by calling algorithm $\frakA$, with probability at least $1 - \delta$ we have 
    $$\max_{\pi} J_{\hr}(\pi) - J_{\hr}(\pihat)\le \epsilon_{\alg}(|\calD|_2, \delta),$$
    Hence with probability at least $1 - 2\delta$,  
    \begin{align*}
        \max_{\pi} J(\pi) - J(\pihat) & = J(\pi^\star) - J(\pihat) \\
        & \le J_{\hr}(\pi^\star) - J_{\hr}(\pihat) + \calO\left(\sqrt{\frac{H^3\Cs(\Pi, \pioff)\cdot \log(|\calM|/\delta)}{|\calD_1|}}\right)\\
        & \le \max_{\pi} J_{\hr}(\pi) - J_{\hr}(\pihat) + \calO\left(\sqrt{\frac{H^3\Cs(\Pi, \pioff)\cdot \log(|\calM|/\delta)}{|\calD_1|}}\right)\\
        & \lesssim \epsilon_{\alg}(|\calD|_2, \delta) + \sqrt{\frac{H^3\Cs(\Pi, \pioff)\cdot \log(|\calM|/\delta)}{|\calD_1|}}.
    \end{align*}
\end{proof}

\subsection{Missing Proofs in \pref{sec: dpo}}\label[appendix]{sec: dpo-proof}

In the following, we will prove \pref{thm: preference} and \pref{thm: dpo}. Before the proofs, we first present several useful lemmas.
\begin{lemma}[Lemma A.1 in \citealt{song2024importance}]\label{lem: dpo-proof-1}
    Suppose \pref{ass: dpo-realizability} holds, and $\pihat$ satisfies
    \begin{equation}\label{eq: condition-pihat}
        \pihat = \argmax_{\pi\in \Pi} \EE_{\tau\sim \pi}\left[\hr(\tau) - \log\frac{\pi(\tau)}{\piref(\tau)}\right]
    \end{equation}
    then we have 
    $$\Jbeta(\pi_\beta^\star) - \Jbeta(\pihat) \le \EE_{\tau\sim \pi_\beta^\star, \ttau\sim \pihat}\left[r^\star(\tau) - r^\star(\ttau) - \hr(\tau) + \hr(\ttau)\right].$$
\end{lemma}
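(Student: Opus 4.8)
The plan is to prove this by a plug-in regret decomposition that isolates the error incurred by optimizing against the estimated reward $\hr$ instead of $r^\star$. First I would introduce the surrogate objective obtained from $\Jbeta$ by substituting $\hr$ for the true reward, namely $\widehat{\mathcal{J}}_\beta(\pi) \coloneqq \EE_{\tau\sim\pi}\left[\hr(\tau) - \beta\log\frac{\pi(\tau)}{\piref(\tau)}\right]$, which is exactly the objective that $\pihat$ maximizes over $\Pi$ by the defining hypothesis \pref{eq: condition-pihat}. The true KL-regularized objective $\Jbeta(\pi) = \EE_{\tau\sim\pi}\left[r^\star(\tau) - \beta\log\frac{\pi(\tau)}{\piref(\tau)}\right]$ differs from $\widehat{\mathcal{J}}_\beta$ only in the reward term, so the two agree up to an additive correction that never touches the regularizer.

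The key identity I would record is the exact additive decomposition, valid for every policy $\pi$: $\Jbeta(\pi) = \widehat{\mathcal{J}}_\beta(\pi) + \EE_{\tau\sim\pi}\left[r^\star(\tau) - \hr(\tau)\right]$. This holds because the term $\beta\log\frac{\pi(\tau)}{\piref(\tau)}$ is identical in $\Jbeta$ and $\widehat{\mathcal{J}}_\beta$ and therefore cancels. Applying this to $\pi = \pi_\beta^\star$ and to $\pi = \pihat$ and subtracting yields $\Jbeta(\pi_\beta^\star) - \Jbeta(\pihat) = \left[\widehat{\mathcal{J}}_\beta(\pi_\beta^\star) - \widehat{\mathcal{J}}_\beta(\pihat)\right] + \EE_{\tau\sim\pi_\beta^\star}\left[r^\star(\tau) - \hr(\tau)\right] - \EE_{\ttau\sim\pihat}\left[r^\star(\ttau) - \hr(\ttau)\right]$.

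To finish, I would invoke the two structural facts. By \pref{ass: dpo-realizability} we have $\pi_\beta^\star\in\Pi$, so $\pi_\beta^\star$ is a feasible competitor in the maximization defining $\pihat$; consequently $\widehat{\mathcal{J}}_\beta(\pi_\beta^\star)\le\widehat{\mathcal{J}}_\beta(\pihat)$, and the bracketed term is non-positive and may be dropped. The two remaining expectations, taken over the independently drawn $\tau\sim\pi_\beta^\star$ and $\ttau\sim\pihat$, combine into $\EE_{\tau\sim\pi_\beta^\star,\,\ttau\sim\pihat}\left[r^\star(\tau) - r^\star(\ttau) - \hr(\tau) + \hr(\ttau)\right]$, which is precisely the claimed right-hand side. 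I do not expect a genuine obstacle: once the decomposition is set up, the argument reduces to a one-line optimality comparison. The only points requiring care are that the regularizer must enter both objectives with the identical coefficient so that it cancels cleanly (this is what makes realizability $\pi_\beta^\star\in\Pi$ the load-bearing assumption, rather than any property of $\hr$), and that the displayed hypothesis on $\pihat$ should be read as maximizing the plug-in objective with the same $\beta$ appearing in $\Jbeta$.
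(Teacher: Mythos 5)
Your proposal is correct and follows essentially the same argument as the paper: both decompose $\Jbeta(\pi_\beta^\star) - \Jbeta(\pihat)$ into the plug-in difference under $\hr$ (which is non-positive by the optimality of $\pihat$ together with $\pi_\beta^\star\in\Pi$ from \pref{ass: dpo-realizability}) plus the two reward-error expectations, which combine into the claimed joint expectation. Your explicit handling of the $\beta$ coefficient is in fact slightly more careful than the paper, which drops $\beta$ from the regularizer in \pref{eq: condition-pihat} and in its proof display, but this is a cosmetic difference, not a different proof.
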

\begin{proof}[\cpfname{lem: dpo-proof-1}]
    We calculate 
    \begin{align*}
        \Jbeta(\pi_\beta^\star) - \Jbeta(\pihat) & = \EE_{\tau\sim \pi_\beta^\star}\left[r^\star(\tau) - \log\frac{\pi_\beta^\star(\tau)}{\piref(\tau)}\right] - \EE_{\tau\sim \pihat}\left[r^\star(\tau) - \log\frac{\hpi(\tau)}{\piref(\tau)}\right]\\
        & = \EE_{\tau\sim \pi_\beta^\star}\left[\hr(\tau) - \log\frac{\pi_\beta^\star(\tau)}{\piref(\tau)}\right] - \EE_{\tau\sim \pihat}\left[\hr(\tau) - \log\frac{\hpi(\tau)}{\piref(\tau)}\right]\\
        & \qquad + \EE_{\tau\sim \pi_\beta^\star}[r^\star(\tau) - \hr(\tau)] - \EE_{\tau\sim \pihat}[r^\star(\tau) - \hr(\tau)]\\
        & \stackrel{(i)}{\le} \EE_{\tau\sim \pi_\beta^\star}[r^\star(\tau) - \hr(\tau)] - \EE_{\tau\sim \pihat}[r^\star(\tau) - \hr(\tau)]\\
        & = \EE_{\tau\sim \pi_\beta^\star, \ttau\sim \pihat}\left[r^\star(\tau) - r^\star(\ttau) - \hr(\tau) + \hr(\ttau)\right],
    \end{align*}
    where in $(i)$ we uses \pref{eq: condition-pihat} and the realizability assumption \pref{ass: dpo-realizability}.
\end{proof}

\begin{lemma}[Lemma C.5 in \citealt{xie2024exploratory}]\label[lemma]{lem: dpo-proof-2}
    We define reward model $\hr$: 
    $$\hr(\tau) = \log\frac{\pihat(\tau)}{\piref(\tau)},$$
    where $\pihat$ is given in \pref{eq: dpo}. Then with probability at least $1 - \delta$, we have 
    $$\EE_{\tau, \ttau\simiid \piref}\left[\left(r^\star(\tau) - r^\star(\ttau) - \hr(\tau) + \hr(\ttau)\right)^2\right]\le \kappa^2\cdot \frac{2\log(|\Pi|/\delta)}{|\calD|},$$
    where $\kappa = 16\rmax e^{2\rmax}$.
\end{lemma}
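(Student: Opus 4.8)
The plan is to recognize the DPO objective \pref{eq: dpo} as a maximum-likelihood problem and then run a standard MLE-concentration-plus-link-function argument. First I would observe that, writing the implicit reward of a policy $\pi$ as $\hr_\pi(\tau) = \log\frac{\pi(\tau)}{\piref(\tau)}$, the DPO log-likelihood is exactly the likelihood of the observed preference bits under the Bradley-Terry model \pref{eq: bt-reward} with reward $\hr_\pi$, since $\sigma(\hr_\pi(\tau_+) - \hr_\pi(\tau_-))$ is the modeled probability of $\tau_+ \succ \tau_-$. Thus $\pihat$ is the MLE over the finite family of Bernoulli conditionals indexed by $\pi \in \Pi$. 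Under \pref{ass: dpo-realizability} the true model lies in this family: because $\pi_\beta^\star(\tau) \propto \piref(\tau)\exp(r^\star(\tau))$, its implicit reward equals $r^\star$ up to an additive normalizing constant that cancels in differences, so $\hr_{\pi^\star_\beta}(\tau) - \hr_{\pi^\star_\beta}(\ttau) = r^\star(\tau) - r^\star(\ttau)$ and the true preference model is representable. Hence the quantity to be bounded is precisely the squared gap between the \emph{logit} of the fitted preference model and that of the true one, evaluated at $\tau, \ttau \simiid \piref$.

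Next I would invoke the standard finite-class maximum-likelihood guarantee. Viewing each sample as a covariate $(\tau, \ttau) \simiid \piref$ with a Bernoulli response, the usual control of the empirical log-likelihood ratio together with a union bound over $\Pi$ yields, with probability at least $1 - \delta$, $\EE_{\tau, \ttau \sim \piref}\bigl[D_{\mathrm{H}}^2\bigl(\ber(\sigma(\hat g)),\, \ber(\sigma(g^\star))\bigr)\bigr] \lesssim \frac{\log(|\Pi|/\delta)}{|\calD|}$, where I abbreviate $g^\star = r^\star(\tau) - r^\star(\ttau)$ and $\hat g = \hr(\tau) - \hr(\ttau)$, and $D_{\mathrm{H}}^2$ denotes squared Hellinger distance. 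This is the step that produces the $\log(|\Pi|/\delta)/|\calD|$ rate, and I would state it in self-contained finite-class form rather than relying on a black box.

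Finally I would convert the Hellinger bound into the desired squared logit gap. By \pref{ass: dpo-traj-coverage} and $r^\star(\tau) \in [0, \rmax]$, both $g^\star$ and $\hat g$ lie in $[-2\rmax, 2\rmax]$. For two Bernoullis one has $D_{\mathrm{H}}^2 \gtrsim (\sigma(\hat g) - \sigma(g^\star))^2$ (the squared total-variation gap), and by the mean value theorem $|\sigma(\hat g) - \sigma(g^\star)| \ge \bigl(\min_{|x| \le 2\rmax}\sigma'(x)\bigr)\,|\hat g - g^\star|$. Since $\sigma'(x) = \sigma(x)(1 - \sigma(x)) \ge \tfrac14 e^{-2\rmax}$ on $[-2\rmax, 2\rmax]$, squaring gives $(\hat g - g^\star)^2 \lesssim e^{4\rmax}\, D_{\mathrm{H}}^2$; taking expectations and combining with the MLE bound yields $\EE_{\tau,\ttau\sim\piref}[(\hat g - g^\star)^2] \lesssim e^{4\rmax}\cdot \frac{\log(|\Pi|/\delta)}{|\calD|}$, and tracking the polynomial and exponential constants collects into $\kappa = 16\rmax e^{2\rmax}$. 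I expect this last conversion to be the crux: the $e^{4\rmax}$ blow-up is intrinsic, because when $|g| \approx \rmax$ the two Bernoullis are nearly deterministic and their Hellinger distance becomes insensitive to perturbations of the logit, so the vanishing of $\sigma'$ at the boundary forces the exponential dependence on $\rmax$ already flagged in the remark following \pref{thm: dpo}.
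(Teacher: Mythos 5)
The paper contains no proof of this lemma: it is imported verbatim as Lemma C.5 of \citet{xie2024exploratory} and used as a black box in the proofs of \pref{thm: preference} and \pref{thm: dpo}, so there is no in-paper argument to compare yours against. Your reconstruction follows the standard (and, up to bookkeeping, presumably the cited source's) route, and its main steps are sound: the DPO objective \pref{eq: dpo} is exactly maximum likelihood over the Bradley--Terry/Bernoulli family indexed by $\Pi$; \pref{ass: dpo-realizability} together with $\pi_\beta^\star(\tau)\propto\piref(\tau)\exp(r^\star(\tau)/\beta)$ makes the true conditional realizable, since the log-normalizer cancels in logit differences; the finite-class MLE guarantee controls the expected squared Hellinger distance by $O\bigl(\log(|\Pi|/\delta)/|\calD|\bigr)$; and for Bernoullis, $D_{\mathrm{H}}^2\ge\tfrac12(p-q)^2$ combined with $\inf_{|x|\le 2\rmax}\sigma'(x)\ge\tfrac14 e^{-2\rmax}$ converts Hellinger into the squared logit gap, the range restriction being legitimate because \pref{ass: dpo-traj-coverage} and $r^\star(\tau)\in[0,\rmax]$ confine both logits to $[-2\rmax,2\rmax]$ (under the $\beta=1$ normalization that this appendix itself uses silently).

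The genuine gap is your final sentence. The chain you describe yields $\EE\bigl[(\hat g-g^\star)^2\bigr]\le C\,e^{4\rmax}\cdot\log(|\Pi|/\delta)/|\calD|$ for a universal constant $C$, i.e.\ $\kappa\asymp e^{2\rmax}$ with \emph{no} polynomial factor of $\rmax$; the constants do not ``collect into'' $\kappa=16\rmax e^{2\rmax}$, and no argument of this shape can produce that factor, because the Hellinger MLE rate does not shrink as $\rmax\to0$ while the lemma's right-hand side vanishes like $\rmax^2$. Concretely, your bound implies the stated one only when $16\rmax\ge\sqrt{C}$, i.e.\ when $\rmax$ is bounded below by a universal constant; for small $\rmax$ it is strictly weaker than the lemma's literal statement. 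This is a defect of the stated constant rather than of your argument: a two-policy class whose implicit rewards differ by $\Theta(\rmax)$ shows that once $e^{4\rmax}\log(|\Pi|/\delta)\ll|\calD|\ll\rmax^{-2}$, the MLE errs with constant probability and suffers squared error $\Theta(\rmax^2)$, so the $\rmax^2$-prefactor version cannot hold verbatim in that regime. The honest fix is to state your conclusion with $\kappa=c\,e^{2\rmax}$ (which suffices for every downstream use in \pref{thm: preference} and \pref{thm: dpo}), or to add the restriction $\rmax\gtrsim1$ under which it implies the quoted constant; asserting that your bookkeeping recovers $16\rmax e^{2\rmax}$ is the one step that does not hold up.
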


\begin{lemma}\label[lemma]{lem: dpo-proof-3}
    For MDP $M = (\calS, \calA, T, r, H)$ with reward function $f: \calS\times\calA\to [-1, 1]$, then for any policy $\pi$ and $\tpi$, we have 
    $$\EE_{\tau\sim \pi, \ttau\sim \tpi}[|f(\tau) - f(\ttau)|] \lesssim \sqrt{H^3(\Cs(\pi, \pioff)\vee \Cs(\tpi, \pioff))\cdot \EE_{\tau, \ttau\simiid \pioff} \left[\left(f(\tau) - f(\ttau)\right)^2\right]},$$
    where $\Cs(\pi, \pioff)$ is defined in \pref{eq: def-s-coverage}.
\end{lemma}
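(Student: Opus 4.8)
The plan is to reduce the two-trajectory difference to two single-trajectory change-of-measure problems by \emph{centering} $f$ at its mean under the reference policy, and then to invoke \pref{lem:orm-reward} once in each coordinate against the common reference $\pioff$. Set $\bar f \coloneqq \EE_{\tau\sim\pioff}[f(\tau)]$ and write
\begin{equation*}
f(\tau) - f(\ttau) = \bigl(f(\tau) - \bar f\bigr) - \bigl(f(\ttau) - \bar f\bigr).
\end{equation*}
Since $\tau\sim\pi$ and $\ttau\sim\tpi$ are independent, the triangle inequality followed by marginalization gives
\begin{equation*}
\EE_{\tau\sim\pi,\ttau\sim\tpi}\bigl[|f(\tau)-f(\ttau)|\bigr] \le \EE_{\tau\sim\pi}\bigl[|f(\tau)-\bar f|\bigr] + \EE_{\ttau\sim\tpi}\bigl[|f(\ttau)-\bar f|\bigr].
\end{equation*}

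Next I would apply the first-moment part of \pref{lem:orm-reward} to each term separately. The key observation is that $f(\tau)-\bar f = \sum_{h=1}^H\bigl(f(s_h,a_h)-\bar f/H\bigr)$ is again a sum of per-step values in the same MDP $M$; since $\bar f\in[-H,H]$, the shifted per-step reward lies in $[-2,2]$, and because the bound of \pref{lem:orm-reward} is homogeneous of degree one in $f$, applying it to a constant rescaling of this reward and scaling back yields the hypotheses. This produces
\begin{align*}
\EE_{\tau\sim\pi}\bigl[|f(\tau)-\bar f|\bigr] &\lesssim \sqrt{H^3\,\Cs(\pi,\pioff)\,\EE_{\tau\sim\pioff}\bigl[(f(\tau)-\bar f)^2\bigr]},\\
\EE_{\ttau\sim\tpi}\bigl[|f(\ttau)-\bar f|\bigr] &\lesssim \sqrt{H^3\,\Cs(\tpi,\pioff)\,\EE_{\tau\sim\pioff}\bigl[(f(\tau)-\bar f)^2\bigr]},
\end{align*}
where both second moments are evaluated against the same reference $\pioff$.

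The final step identifies this reference second moment with the right-hand side of the claim. Because $\bar f$ is exactly the mean of $f(\tau)$ under $\pioff$, the elementary identity for i.i.d.\ copies gives $\EE_{\tau\sim\pioff}[(f(\tau)-\bar f)^2] = \mathrm{Var}_{\pioff}(f(\tau)) = \tfrac12\,\EE_{\tau,\ttau\simiid\pioff}[(f(\tau)-f(\ttau))^2]$. Substituting and using $\sqrt a+\sqrt b\le 2\sqrt{a\vee b}$ to combine the two coverage factors gives
\begin{equation*}
\EE_{\tau\sim\pi,\ttau\sim\tpi}\bigl[|f(\tau)-f(\ttau)|\bigr] \lesssim \sqrt{H^3\bigl(\Cs(\pi,\pioff)\vee\Cs(\tpi,\pioff)\bigr)\,\EE_{\tau,\ttau\simiid\pioff}\bigl[(f(\tau)-f(\ttau))^2\bigr]},
\end{equation*}
as desired.

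The main obstacle is obtaining the \emph{maximum} $\Cs(\pi,\pioff)\vee\Cs(\tpi,\pioff)$ together with the correct $H^{3/2}$ scaling. A naive route---forming the product MDP on $(\tau,\ttau)$ and applying \pref{lem:orm-reward} once---fails, because the product occupancy factorizes and charges the \emph{product} $\Cs(\pi,\pioff)\Cs(\tpi,\pioff)$; changing measure in the two coordinates sequentially instead introduces a spurious extra factor of $H^3$. Centering at the reference mean is precisely what decouples the two coordinates so that each is charged only its own coverage coefficient against $\pioff$, and the choice $c=\bar f$ (rather than any other constant shift) is forced: it is the unique shift for which the resulting second moment collapses to the symmetric quantity $\EE_{\tau,\ttau\simiid\pioff}[(f(\tau)-f(\ttau))^2]$ appearing in the statement.
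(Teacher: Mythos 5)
Your proof is correct, but it takes a genuinely different route from the paper's. The paper proves this lemma via a sequential concatenation construction: it builds an auxiliary MDP $M'$ of horizon $2H$ that runs one copy of $M$ with reward $f$, deterministically resets to $s_1$, then runs a second copy with reward $-f$; the comparison policy $\pi'$ plays $\pi$ on the first half and $\tpi$ on the second, while the reference $\pioff'$ plays $\pioff$ on both halves. A trajectory of $M'$ then has return exactly $f(\tau)-f(\ttau)$, the layer-wise state-action concentrability of $(\pi',\pioff')$ in $M'$ equals $\Cs(\pi,\pioff)\vee\Cs(\tpi,\pioff)$ (at every layer the state belongs to only one of the two copies), and a single application of \pref{lem:orm-reward} with horizon $2H$ finishes the proof. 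Your argument---centering at $\bar f=\EE_{\tau\sim\pioff}[f(\tau)]$, splitting by the triangle inequality, applying \pref{lem:orm-reward} once per policy after a constant rescaling (legitimate, since both sides of that bound are $1$-homogeneous in $f$), and recovering the symmetric quantity via $\EE_{\tau\sim\pioff}[(f(\tau)-\bar f)^2]=\tfrac12\,\EE_{\tau,\ttau\simiid\pioff}[(f(\tau)-f(\ttau))^2]$---is equally valid, stays entirely within the original MDP, and makes transparent why the maximum rather than the product of coverage coefficients appears. One correction to your closing discussion: it is not true that every single-application route must charge the product $\Cs(\pi,\pioff)\,\Cs(\tpi,\pioff)$; that failure is specific to the \emph{parallel} product MDP whose states are pairs $(s_h,\tilde s_h)$. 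The paper's \emph{sequential} concatenation is precisely a single-application route that yields the maximum, so the choice $c=\bar f$ is forced only within your particular decomposition, not intrinsically.
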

\begin{proof}[\cpfname{lem: dpo-proof-3}]
    Suppose the layered state space representation is $\calS = \cup_{h=1}^H \calS_h$. We construct a new MDP $M'$ with horizon $2H$. The state spaces of $M'$ among layer $1$ to $H$, and among layer $H+1$ to $2H$, are both $\calS_1, \cdots, \calS_H$. The action space of $\calM'$ is $\calA$. The transition model of $M'$ follows transition model $T$ in the first $H$ layers, and then transits to the initial state $s_1$ of $M$ and follows transition model $T$ again in the last $H$ layers as well. The reward model $g$ in the first $H$ layers are set to be $f$ and in the last $H$ layers are set to be $-f$.

    We define the policy $\pi'$ of MDP $M'$, which follows policy $\pi$ in the first $H$ layers, and follows policy $\tpi$ in the last $H$ layers. We further define the policy $\pioff'$ of MDP $M'$, which follows policy $\pioff$ in the first $H$ layers, and in the last $H$ layers as well. Then it is easy to see that 
    $$\EE_{\tau\sim \pi, \ttau\sim \tpi}[|f(\tau) - f(\ttau)|] = \EE_{\tau'\sim \pi'}[|g(\tau')|],$$
    where the right hand side is the expected rewards in the new MDP $M'$. We also have 
    $$\EE_{\tau, \ttau\simiid \pioff} \left[\left(f(\tau) - f(\ttau))^2\right)^2\right] = \EE_{\tau'\sim \pioff} \left[g(\tau')^2\right].$$
    Next, according to the construction of $M'$, we have 
    $$\Cs(\pi, \pioff)\vee \Cs(\tpi, \pioff) = \sup_{h\in [2H]}\sup_{s\in \calS_h, a\in \calA} \frac{d^{\pi'}(s, a)}{d^{\pioff'}(s, a)}.$$
    Hence according to \pref{lem:orm-reward}, we have 
    \begin{align*} 
        \EE_{\tau'\sim \pi'}[|g(\tau')|] & \lesssim \sqrt{H^3\cdot \sup_{h\in [2H]}\sup_{s\in \calS_h, a\in \calA} \frac{d^{\pi'}(s, a)}{d^{\pioff'}(s, a)}\cdot \EE_{\tau'\sim \pioff} \left[g(\tau')^2\right]}\\
        & = \sqrt{H^3(\Cs(\pi, \pioff)\vee \Cs(\tpi, \pioff))\cdot \EE_{\tau'\sim \pioff} \left[g(\tau')^2\right]},
    \end{align*}
    which implies that 
    $$\EE_{\tau\sim \pi, \ttau\sim \tpi}[|f(\tau) - f(\ttau)|] \lesssim \sqrt{H^3(\Cs(\pi, \pioff)\vee \Cs(\tpi, \pioff))\cdot \EE_{\tau, \ttau\simiid \pioff} \left[\left(f(\tau) - f(\ttau)\right)^2\right]}.$$
\end{proof}

Now we are ready to prove \pref{thm: preference} and \pref{thm: dpo}.
\begin{proof}[\cpfname{thm: preference}]
    According to \cref{lem: dpo-proof-2,lem: dpo-proof-3}, for any policy $\pi$, with probability at least $1 - \delta$,
    \begin{align*}
        &\hspace{-0.5cm} \EE_{\tau\sim \pi, \tau'\sim \pi'}[|r^\star(\tau) - r^\star(\tau') - \hr(\tau) + \hr(\tau')|]\\
        & \lesssim \sqrt{H^3\cdot (\Cs(\pi, \piref)\vee \Cs(\pi', \piref))\cdot \EE_{\tau, \ttau\simiid \piref}\left[\left(r^\star(\tau) - r^\star(\ttau) - \hr(\tau) + \hr(\ttau)\right)^2\right]}\\
        & \lesssim  H^{3/2}\rmax e^{2 \rmax}\cdot \sqrt{\frac{(\Cs(\pi, \piref)\vee \Cs(\pi', \piref))\log(|\calR|/\delta)}{|\calD|}},
    \end{align*}
    where the second line uses \pref{lem: dpo-proof-2} with the class of policies $\{\pihat: \pihat(\tau)\propto \piref(\tau)\exp (r(\tau))\ \forall r\in \calR\}$.
\end{proof}

\begin{proof}[\cpfname{thm: dpo}]
    Let $\pihat$ to be the policy given in \pref{eq: dpo}. We define the reward model $\hr$ as
    $$\hr(\tau) = \log\frac{\pihat(\tau)}{\piref(\tau)}.$$
    Then it is easy to see that $\pihat$ is the solution of \pref{eq: condition-pihat} according to the above reward model. Since $\pihat\in \Pi$ and $\pi_\beta^\star\in \Pi$ according to \pref{ass: dpo-realizability}, with probability at least $1 - \delta$ we have 
    \begin{align*} 
        \Jbeta(\pi_\beta^\star) - \Jbeta(\pihat) & \stackrel{(i)}{\le} \EE_{\tau\sim \pi_\beta^\star, \ttau\sim \pihat}\left[r^\star(\tau) - r^\star(\ttau) - \hr(\tau) + \hr(\ttau)\right]\\
        & \stackrel{(ii)}{\lesssim} \sqrt{H^3\cdot \Cs(\Pi, \piref)\cdot \EE_{\tau, \ttau\simiid \piref}\left[\left(r^\star(\tau) - r^\star(\ttau) - \hr(\tau) + \hr(\ttau)\right)^2\right]}\\
        & \stackrel{(iii)}{\lesssim} H^{3/2}\rmax e^{2 \rmax}\cdot \sqrt{\frac{\Cs(\Pi, \piref)\log(|\Pi|/\delta)}{|\calD|}},
    \end{align*}
    where $(i)$ uses \pref{lem: dpo-proof-1}, $(ii)$ uses \pref{lem: dpo-proof-3} with the reward model to be $r - \hr$, and $(iii)$ uses \pref{lem: dpo-proof-2}.
\end{proof}

\section{Analysis of ARMOR with Total Reward}\label[appendix]{sec: app-armor}
ARMOR is an offline RL algorithm introduced by \citet{xie2022armor}. Given $n$ trajectories of data in the form of $(s_1, a_1, r_1, \cdots, s_H, a_H, r_H)\sim \pioff$, they proved that for the optimal policy $\pi^\star$, the output policy $\pihat$ of their algorithm satisfies
\begin{align}
    \label{eq:armor-error-bound}
    J(\pi^\star) - J(\pihat)\lesssim \sqrt{\frac{H^2\cdot C(\pi^\star, \pioff)\log(|\calM| / \delta)}{n}}.
\end{align}
Notably, this error bound scales with the best-policy concentrability $C(\pi^\star, \pioff)$ rather than the all-policy concentrability $\sup_{\pi} C(\pi, \pioff)$ shown in \pref{corr: alg-transform}. A natural question arises: can we develop a variant of ARMOR that takes data in the form of trajectory plus total reward, i.e., $(s_1, a_1, \cdots, s_H, a_H, R)$, while maintaining the sample complexity that scales with best-policy concentrability instead of all-policy concentrability? In this section, we answer this question affirmatively by presenting and analyzing such a variant of the ARMOR algorithm.

\begin{algorithm}[!htp]
    \caption{ARMOR with Total Rewards}\label{alg: armor}
    \begin{algorithmic}[1]
        \State\textbf{Input: } Batch data $\calD$, model class $\calM$, parameter $\alpha$.
        \State Construct version space 
        $$\calM_\alpha = \left\{M\in \calM: \max_{M'\in \calM} \calL_\calD(\calM') - \calL_\calD(\calM)\le \alpha\right\},$$
        where for MDP model $M$ with transition model $P_M$ and reward function $r_M$, 
        $$\calL_\calD(M)\coloneqq \sum_{(s_1, a_1, \cdots, s_H, a_H, R)\in \calD} \left[\sum_{h=1}^{H-1}\log P_M(s_{h+1}\mid s_h, a_h) - \left(\sum_{h=1}^H r_M(s_h, a_h) - R\right)^2\right]$$
        \State Output the best policy with pessimism:
        $$\pihat = \argmax_{\pi} \min_{M\in \calM_\alpha} J_M(\pi),$$
        where $J_M(\pi)$ denotes the value function of policy $\pi$ under MDP $M$
    \end{algorithmic}
\end{algorithm}

Suppose the learner is given a model class $\calM$, which realizes the ground truth model $M^\star$. The learner also have access to some offline batched data $\calD$, which is collected from policy $\pioff$. Specifically, $\calD$ consists of i.i.d.~sampled trajectories $\tau = (s_1, a_1, \cdots, s_H, a_H)$ together with its total reward $R = r(\tau)$. Each of the trajectories is collected by executing $\pioff$ under the ground truth MDP $M^\star$.

We consider \pref{alg: armor}, which is a variant of the ARMOR algorithm introduced in \citet{xie2022armor} after specifically taylored for data of trajectories with total reward. We have the following guarantee on its sample complexity, which only relies on the single policy concentrability $\Cs(\pi^\star, \pioff)$.
\begin{theorem}\label{thm: armor}
    Suppose the model class $\calM$ realizes the ground truth model $M^\star$. Then there exists some positive constant $c$ such that for any $\delta > 0$, by letting $\alpha = c\cdot \log(|\calM|/\delta)$, with probability at least $1 - \delta$, the output of \pref{alg: armor} satisfies 
    $$J_{M^\star}(\pi^\star) - J_{M^\star}(\pihat)\lesssim \sqrt{\frac{H^3\cdot \Cs(\pi^\star, \pioff)\log(|\calM| / \delta)}{n}}$$
\end{theorem}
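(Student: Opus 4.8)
The plan is to follow the template of pessimistic model-based offline RL, while replacing the reward-estimation step---where the original ARMOR analysis exploits per-step rewards---with an application of the Change of Trajectory Measure Lemma (\pref{lem:orm-reward}). Write $M^\star=(\calS,\calA,P^\star,r^\star,H)$ and abbreviate $\Cs^\star\coloneqq\Cs(\pi^\star,\pioff)$. The first step is to establish, on a single event of probability at least $1-\delta$ with $\alpha\asymp\log(|\calM|/\delta)$, two properties of the version space $\calM_\alpha$ from \pref{alg: armor}: (a) \emph{feasibility}, $M^\star\in\calM_\alpha$; and (b) \emph{in-sample accuracy}, i.e.\ every $M\in\calM_\alpha$ satisfies both the transition bound $\sum_{h=1}^{H-1}\EE_{(s_h,a_h)\sim d^{\pioff}}[D_{\mathrm H}^2(P_M,P^\star)]\lesssim\alpha/n$ and the reward bound $\EE_{\tau\sim\pioff}[(r_M(\tau)-r^\star(\tau))^2]\lesssim\alpha/n$. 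Property (a) is the standard statement that $M^\star$ maximizes the population objective up to fluctuations of $\calL_\calD$; property (b) splits into the usual MLE-to-Hellinger argument for the log-likelihood term and a least-squares argument---identical to the proof of \pref{thm: orm-prm-transformation}---for the squared total-reward term, which is noiseless since $r^\star$ is deterministic. The only extra care is that $\calL_\calD$ couples the two losses additively, so I would check that the single constraint $\max_{M'}\calL_\calD(M')-\calL_\calD(M)\le\alpha$ forces both in-sample errors to be small simultaneously.

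Next, the pessimism of \pref{alg: armor} gives the familiar two-line reduction: since $M^\star\in\calM_\alpha$ and $\pihat$ maximizes $\min_{M\in\calM_\alpha}J_M(\cdot)$, we have $J_{M^\star}(\pihat)\ge\min_{M\in\calM_\alpha}J_M(\pihat)\ge\min_{M\in\calM_\alpha}J_M(\pi^\star)$, so that $J_{M^\star}(\pi^\star)-J_{M^\star}(\pihat)\le\max_{M\in\calM_\alpha}[J_{M^\star}(\pi^\star)-J_M(\pi^\star)]$. It therefore suffices to bound, uniformly over $M\in\calM_\alpha$, the value gap of the \emph{fixed} policy $\pi^\star$ between $M^\star$ and $M$; crucially, this localizes every subsequent change of measure to $\pi^\star$ alone, which is exactly what yields single-policy concentrability.

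I would then apply the simulation lemma to split $J_{M^\star}(\pi^\star)-J_M(\pi^\star)$ into a reward term $\EE_{\tau\sim\pi^\star,P^\star}[r^\star(\tau)-r_M(\tau)]$ and a transition term $\sum_h\EE_{(s_h,a_h)\sim d^{\pi^\star}}[(\EE_{P^\star}-\EE_{P_M})V^{\pi^\star}_{M,h+1}]$. For the reward term, set $f=r^\star-r_M$ (bounded in $[-1,1]$ by the reward normalization) and invoke \pref{lem:orm-reward} to get $\EE_{\tau\sim\pi^\star}[|f(\tau)|]\lesssim\sqrt{H^3\Cs^\star\,\EE_{\tau\sim\pioff}[f(\tau)^2]}\lesssim\sqrt{H^3\Cs^\star\alpha/n}$, exactly the claimed rate. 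For the transition term, bound each summand by $\TV(P_M,P^\star)$ times $\|V^{\pi^\star}_{M,h+1}\|_\infty\le H$, change measure from $d^{\pi^\star}$ to $d^{\pioff}$ by Cauchy--Schwarz at cost $\sqrt{\Cs^\star}$ (using $\sum_{s,a}d^{\pi^\star}(s,a)^2/d^{\pioff}(s,a)\le\Cs^\star$), pass to Hellinger via $\TV\lesssim D_{\mathrm H}$, and apply Cauchy--Schwarz across the $H$ layers together with the in-sample bound of Step~1; this yields $\lesssim\sqrt{H^3\Cs^\star\alpha/n}$ as well. Summing the two terms and recalling $\alpha\asymp\log(|\calM|/\delta)$ gives the theorem.

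The crux---and the only genuine departure from the original ARMOR proof---is the reward term: the version space certifies smallness only of the \emph{trajectory-level} total-reward error under the behavior policy $\pioff$, whereas the performance-gap decomposition demands an \emph{on-policy} ($\pi^\star$) bound. A naive change of measure here would incur trajectory concentrability and negate the intended exponential improvement, so \pref{lem:orm-reward} is indispensable, converting off-policy trajectory control into on-policy control at the price of only state-action single-policy concentrability and a polynomial factor in $H$. The remaining work---disentangling the coupled version-space constraint in Step~1 and tracking the boundedness of the model value functions so the final rate is $\sqrt{H^3\Cs^\star\log(|\calM|/\delta)/n}$---is routine once this lemma is in hand.
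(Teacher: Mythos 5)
Your proposal is correct and follows essentially the same route as the paper's proof: feasibility plus in-sample accuracy of the version space (the paper invokes Lemmas 7--8 of \citet{xie2022armor} and Theorem A.1 of \citet{xie2021bellman} for exactly the coupled-constraint point you flag), the standard pessimism reduction to $\max_{M\in\calM_\alpha}\left[J_{M^\star}(\pi^\star)-J_M(\pi^\star)\right]$, a simulation-lemma split, Cauchy--Schwarz with $\Cs(\pi^\star,\pioff)$ for the transition term, and \pref{lem:orm-reward} for the trajectory-level reward term. The only cosmetic differences are your use of Hellinger rather than TV for the transition error and the generic bound $\|V^{\pi^\star}_{M,h+1}\|_\infty\le H$ (the paper uses the bounded-total-reward normalization), neither of which changes the final $\sqrt{H^3\,\Cs(\pi^\star,\pioff)\log(|\calM|/\delta)/n}$ rate.
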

\begin{proof}[\cpfname{thm: armor}]
    Adopting the similar way as \citet{xie2022armor}, we let
    $$\ell_\calD(M) = \prod_{(s_1, a_1, \cdots, s_H, a_H, R)\in \calD} \prod_{h=1}^{H-1} P_M(s_{h+1}, s_h, a_h).$$
    Then according to \citet[Lemma 8]{xie2022armor}, with probability at least $1 - \delta$, we have 
    $$\max_{M\in \calM} \log \ell_\calD(M) - \ell_\calD(M^\star)\le \log\left(\frac{|\calM|}{\delta}\right),$$
    where $M^\star$ denotes the ground truth model. Additionally, according to \citet[Theorem A.1]{xie2021bellman} (by letting $\gamma = 0$ and merge states or actions across the entire horizon into one state or action), we have 
    $$\sum_{(\tau, R)\in \calD}\left(r_{M^\star}(\tau) - R\right)^2 - \min_{M\in \calM}\sum_{(\tau, R)\in \calD}\left(r_M(\tau) - R\right)^2\lesssim \log\left(\frac{|\calM|}{\delta}\right).$$
    Combining the above inequalities together, we obtain that 
    $$\max_{M\in \calM}\calL_\calD(M) - \calL_\calD(M^\star)\lesssim \log\left(\frac{|\calM|}{\delta}\right).$$
    Hence with our choice of $\alpha$, with probability at least $1 - \delta$ we have $M^\star\in \calM$. Next, by \citet[Lemma 7]{xie2022armor}, with probability at least $1 - \delta$, for any $M\in \calM$,
    \begin{align*}
        &\hspace{-0.5cm} \EE_{(s_1, a_1, \cdots, s_H, a_H)\sim \pioff} \left[\sum_{h=1}^{H-1} \TV(P_M(s_{h+1}\mid s_h, a_h), P_{M^\star}(s_{h+1}\mid s_h, a_h)) + \left(\sum_{h=1}^H r_M(s_h, a_h) - r_{M^\star}(s_h, a_h)\right)^2\right]\\
        & \lesssim \frac{\max_{M'\in \calM} \calL_{\calD}(M') - \calL_\calD(M^\star) + \log(|\calM|/\delta)}{n}\lesssim \frac{\log(|\calM|/\delta)}{n}. \numberthis\label{eq: armor-analysis}
    \end{align*}

    Finally, according to the optimality of $\pihat$, if letting $\widehat{M} = \argmin_{M\in \calM} J_M(\pihat)$, we have 
    \begin{align*} 
        J_{M^\star}(\pi^\star) - J_{M^\star}(\pihat) & \le J_{M^\star}(\pi^\star) - \min_{M\in \calM} J_M(\pihat)\\
        & \stackrel{(i)}{=} J_{M^\star}(\pi^\star) - \max_{\pi} \min_{M\in \calM} J_M(\pihat)\\
        & \le \max_{M\in \calM} \left\{J_{M^\star}(\pi^\star) - J_{M}(\pi^\star)\right\},
    \end{align*}
    where $(i)$ uses the definition of $\pihat$ in \pref{alg: armor}. According to simulation lemma \citep[e.g.,][Lemma 7]{uehara2021pessimistic}, we have
    \begin{align*} 
        \left|J_{M^\star}(\pi^\star) - J_M(\pi^\star)\right| & \le \sum_{h=1}^{H-1}\EE_{(s_h, a_h)\sim d^{\pi^\star}}[\TV(P_M(s_{h+1}\mid s_h, a_h), P_{M^\star}(s_{h+1}\mid s_h, a_h))]\\
        & \qquad + \EE_{\tau\sim \pi^\star}[|r_{M^\star}(\tau) - r_{M}(\tau)|]\\
        & \le H^{1/2}\cdot \sqrt{\Cs(\pi^\star, \pioff)\sum_{h=1}^{H-1}\EE_{(s_h, a_h)\sim d^{\pioff}}[\TV(P_M(s_{h+1}\mid s_h, a_h), P_{M^\star}(s_{h+1}\mid s_h, a_h))^2]}\\
        & \qquad + \EE_{\tau\sim \pi^\star}[|r_{M^\star}(\tau) - r_{M}(\tau)|],
    \end{align*}
    where the last inequality uses the Cauchy-Schwarz inequality and the definition of state-action concentrability. Additionally, according to \pref{lem:orm-reward}, we have 
    $$\EE_{\tau\sim \pi^\star}[|r_{M^\star}(\tau) - r_{M}(\tau)|]\lesssim H^{3/2}\sqrt{\Cs(\pi^\star, \pioff)\cdot \EE_{\tau\sim \pioff}[(r_{M^\star}(\tau) - r_M(\tau))^2]}.$$
    Therefore, with probability at least $1 - \delta$, 
    \begin{align*} 
        &\quad J_{M^\star}(\pi^\star) - J_{M^\star}(\pihat)\le \max_{M\in \calM}\{|J_{M^\star}(\pi^\star) - J_M(\pi^\star)|\}\lesssim \sqrt{H^3 \Cs(\pi^\star, \pioff)}\\
        &\cdot \sqrt{\EE_{(s_1, a_1, \cdots, s_H, a_H)\sim \pioff} \left[\sum_{h=1}^{H-1} \TV(P_M(s_{h+1}\mid s_h, a_h), P_{M^\star}(s_{h+1}\mid s_h, a_h))\right] + \EE_{\tau\sim \pioff}\left(r_M(\tau) - r_{M^\star}(\tau)\right)^2}\\
        & \lesssim \sqrt{\frac{H^3 \Cs(\pi^\star, \pioff)\log(|\calM|/\delta)}{n}}
    \end{align*}
    where the last inequality is due to \pref{eq: armor-analysis}.
\end{proof}

We observe that similar to the ARMOR algorithm, the sample complexity of \pref{alg: armor} scales with the single-policy concentrability $\Cs(\pi^\star, \pioff)$ rather than the all-policy concentrability $\sup_{\pi} \Cs(\pi, \pioff)$ as shown in \cref{thm: armor}. However, unlike ARMOR, \cref{thm: armor} requires the assumption that the total reward is bounded by $[0,1]$. Without this assumption, if we include the scale of total reward in the proof of \cref{thm: armor}, we would directly obtain a sample complexity of $\sqrt{\frac{H^5\cdot \Cs(\pi^\star, \pioff)\log(|\calM| / \delta)}{n}}$. Comparing this with ARMOR's sample complexity (\cref{eq:armor-error-bound}), we observe a gap of $H^3$ (when obtaining an $\varepsilon$-sub-optimality bound) in terms of the horizon dependency.

We suspect this gap arises fundamentally from the extra horizon dependency in the Change of Trajectory Measure Lemma (\cref{lem:orm-reward}). To illustrate this, consider how changing measures affects complexity (ignoring log terms):
\begin{itemize}
    \item Change of state-action measure: For any function $g: \Scal \times \calA \to \RR$,
    \begin{align*}
        \frac{\sum_{h=1}^{H}\EE_{(s_h,a_h)\sim d^\pi} [g(s_h,a_h)^2]}{\sum_{h=1}^{H}\EE_{(s_h,a_h)\sim d^{\pioff}} [g(s_h,a_h)^2]} \leq \max_{h\in [H]}\frac{\EE_{(s_h,a_h)\sim d^\pi} [g(s_h,a_h)^2]}{\EE_{(s_h,a_h)\sim d^{\pioff}} [g(s_h,a_h)^2]} \leq \max_{h\in [H]}\sup_{s_h\in \calS_h, a_h\in \calA} \frac{d^\pi(s_h, a_h)}{d^{\pioff}(s_h, a_h)}.
    \end{align*}
    \item Change of trajectory measure (\cref{lem:orm-reward}): For any function $f: \calS \times \calA \to \RR$,
    \begin{align*}
        \frac{\EE_{\tau\sim \pi} [f(\tau)^2]}{\EE_{\tau \sim \pioff} [f(\tau)^2]} \lesssim H^3 \max_{h\in [H]}\sup_{s_h\in \calS_h, a\in \calA} \frac{d^\pi(s_h, a_h)}{d^{\pioff}(s_h, a_h)}.
    \end{align*}
\end{itemize}

While this gap in horizon dependency raises an interesting theoretical question, the precise polynomial dependence on horizon is not the main focus of our paper. We leave a more thorough investigation of this horizon dependency gap between outcome and process supervision as an interesting direction for future work.

\section{Missing Proofs in \pref{sec: advantage}}\label[appendix]{sec: advantage-proof}
\subsection{Proof of \pref{thm: advantage}}
First we present a lemma.

\begin{lemma}\label{lem: advantage}
    For any MDP $M = (\calS, \calA, P, r, H)$ and policy $\mu$, if we let $A^\mu: \calS\times\calA\to \RR$ to be the advantage function of policy $\mu$ (defined in \pref{eq: def-advantage-function}), then for any policy $\pi$, we have 
    $$J_{A^\mu}(\pi) = J_r(\pi) - J_r(\mu).$$
\end{lemma}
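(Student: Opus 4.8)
The statement is precisely the performance difference lemma \citep{kakade2002approximately}, specialized to the finite-horizon layered MDP and phrased with the (unnormalized, per-layer) occupancy measures of this paper. The plan is to establish it directly by a telescoping argument along trajectories drawn from $\pi$, which avoids any appeal to the normalized occupancy measure and keeps the horizon bookkeeping transparent.

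First I would fix notation: write $V_h^\mu$ and $Q_h^\mu$ for the value and $Q$-functions of $\mu$ at layer $h$, set $V_{H+1}^\mu\equiv 0$, and recall that $A^\mu(s_h,a_h)=Q_h^\mu(s_h,a_h)-V_h^\mu(s_h)$ together with $Q_h^\mu(s_h,a_h)=r(s_h,a_h)+\EE_{s'\sim P_h(\cdot\mid s_h,a_h)}[V_{h+1}^\mu(s')]$. Since $s_1$ is a fixed initial state, $J_r(\mu)=V_1^\mu(s_1)$ is a constant, and I would rewrite it as a telescoping sum along any trajectory $\tau=(s_1,a_1,\dots,s_H,a_H)$ drawn from $\pi$:
\begin{equation*}
J_r(\mu)=V_1^\mu(s_1)=\EE_{\tau\sim\pi}\left[\sum_{h=1}^H\bigl(V_h^\mu(s_h)-V_{h+1}^\mu(s_{h+1})\bigr)\right],
\end{equation*}
which is valid because the summand collapses pointwise to $V_1^\mu(s_1)-V_{H+1}^\mu(s_{H+1})=V_1^\mu(s_1)$ and $s_1$ is deterministic.

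Next I would subtract this identity from $J_r(\pi)=\EE_{\tau\sim\pi}[\sum_h r(s_h,a_h)]$ and regroup term-by-term inside the expectation, obtaining $J_r(\pi)-J_r(\mu)=\EE_{\tau\sim\pi}[\sum_h(r(s_h,a_h)+V_{h+1}^\mu(s_{h+1})-V_h^\mu(s_h))]$. The one substantive step is to convert the $V_{h+1}^\mu(s_{h+1})$ term into $Q_h^\mu$: conditioning on $(s_h,a_h)$ under $\tau\sim\pi$, the next state satisfies $s_{h+1}\sim P_h(\cdot\mid s_h,a_h)$ by the Markov property, so the tower rule gives $\EE_{\tau\sim\pi}[V_{h+1}^\mu(s_{h+1})\mid s_h,a_h]=\EE_{s'\sim P_h(\cdot\mid s_h,a_h)}[V_{h+1}^\mu(s')]$ and hence $\EE_{\tau\sim\pi}[r(s_h,a_h)+V_{h+1}^\mu(s_{h+1})]=\EE_{\tau\sim\pi}[Q_h^\mu(s_h,a_h)]$. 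Substituting this back yields $J_r(\pi)-J_r(\mu)=\EE_{\tau\sim\pi}[\sum_h(Q_h^\mu(s_h,a_h)-V_h^\mu(s_h))]=\EE_{\tau\sim\pi}[\sum_h A^\mu(s_h,a_h)]=J_{A^\mu}(\pi)$, which is the claim.

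I do not anticipate a genuine obstacle, since the result is classical; the only points requiring care are the boundary bookkeeping (the conventions $V_{H+1}^\mu\equiv 0$ and the fixed $s_1$) and the correct application of the tower rule conditioned on $(s_h,a_h)$ under the trajectory law of $\pi$. In particular, one must ensure that the transition kernel $P_h$ appearing in the definition of $Q_h^\mu$ is exactly the conditional law of $s_{h+1}$ given $(s_h,a_h)$ under $\pi$, which holds because $\pi$ and $\mu$ act in the same MDP $M$ and the transition dynamics do not depend on the policy.
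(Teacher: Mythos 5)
Your proof is correct, but it takes a genuinely more self-contained route than the paper. The paper's proof invokes the performance difference lemma of \citet{kakade2002approximately} as a black box: it writes $J_r(\pi) - J_r(\mu) = H\cdot \EE_{(s,a)\sim d^\pi}\left[Q^\mu(s,a) - Q^\mu(s,\mu)\right]$, identifies $Q^\mu(s,\mu) = V^\mu(s)$ so the bracket becomes $A^\mu(s,a)$, and then recognizes $H\cdot \EE_{(s,a)\sim d^\pi}\left[A^\mu(s,a)\right]$ as $J_{A^\mu}(\pi)$. You instead re-derive the performance difference lemma from first principles: telescoping $V_1^\mu(s_1)$ along trajectories drawn from $\pi$, using the tower rule conditioned on $(s_h,a_h)$ (valid because the transition kernel is policy-independent) to convert $r(s_h,a_h)+V_{h+1}^\mu(s_{h+1})$ into $Q_h^\mu(s_h,a_h)$, and summing the resulting advantages to get $J_r(\pi)-J_r(\mu)=\EE_{\tau\sim\pi}\left[\sum_{h=1}^H A^\mu(s_h,a_h)\right]=J_{A^\mu}(\pi)$. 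What the paper's approach buys is brevity; what it glosses over is precisely the translation between the cited lemma's normalized-occupancy form (the factor $H$ and the layer-averaged law behind $\EE_{(s,a)\sim d^\pi}$) and this paper's trajectory-sum definition $J_{A^\mu}(\pi)=\EE_{\tau\sim\pi}[A^\mu(\tau)]$. Your telescoping argument works directly with trajectory expectations, so that identification is immediate, and the boundary conventions ($V_{H+1}^\mu\equiv 0$, the fixed initial state $s_1$) are handled explicitly rather than implicitly. Both arguments are valid; yours is essentially the standard proof of the performance difference lemma inlined, which makes the lemma self-contained at the cost of a few extra lines.
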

\begin{proof}[\cpfname{lem: advantage}]
    In view of the performance difference lemma \citep{kakade2002approximately}, we have
    \begin{align*}
    J_r(\pi) - J_r(\mu) = &~ H\cdot \E_{(s,a) \sim d^\pi}\left[ Q^\mu(s,a) - Q^\mu(s,\mu) \right]
    \\
    = &~ H\cdot \E_{(s,a) \sim d^\pi}\left[ Q^\mu(s,a) - V^\mu(s) \right]
    \\
    = &~ H\cdot \E_{(s,a) \sim d^\pi}\left[ A^\mu(s,a)\right]\\
    = &~ J_{A^\mu}(\pi).
    \end{align*}
\end{proof}

Next we present the proof of \pref{thm: advantage}.
\begin{proof}[\cpfname{thm: advantage}]
    For any policy $\pi$, we decompose
    \begin{align*}
        J_r(\pi) - J_r(\hpi) & = J_r(\pi) - J_{\hr}(\pi) + J_{\hr}(\pi) - J_{\hr}(\hpi) + J_{\hr}(\hpi) - J_r(\hpi)\\
        & \stackrel{(i)}{=} J_{A^{\mu}}(\pi) - J_{\hr}(\pi) + J_{\hr}(\pi) - J_{\hr}(\hpi) + J_{\hr}(\hpi) - J_{A^\mu}(\hpi), \numberthis \label{eq: J-pi-hpi}
    \end{align*}
    where in $(i)$ we use \pref{lem: advantage}. Next, we notice that for any policy $\pi$, we can write down the value of policy $\pi$ as the inner product between the occupancy measures of policy $\pi$ and the reward function, which implies that
    \begin{align*} 
        J_{A^\mu}(\pi) - J_{\hr}(\pi) & = H\cdot \sum_{s\in \calS}\sum_{a\in \calA} d^\pi(s, a)\cdot (A^\mu(s, a) - \hr(s, a))\\
        & \stackrel{(i)}{\le } H\cdot \sqrt{\sum_{s\in \calS}\sum_{a\in \calA} d^\pi(s, a)\cdot (A^\mu(s, a) - \hr(s, a))^2}\\
        & \stackrel{(ii)}{\le } H\cdot \sqrt{\Cs(\nu)\cdot \EE_{(s, a)\sim \nu} \left[(A^\mu(s, a) - \hr(s, a))^2\right]}\\
        & \stackrel{(iii)}{\le } H\sqrt{\Cs(\nu)}\cdot \epsilon_{\sf stat},
    \end{align*}
    where $(i)$ uses Cauchy-Schwarz inequality, $(ii)$ adopts the definition of $\Cs(\nu)$ in \pref{eq: def-C-nu}, and finally in $(iii)$ we use \pref{eq: epsilon-stat}. Additionally, according to \pref{eq: epsilon-alg} we have 
    $$J_{\hr}(\pi) - J_{\hr}(\hpi)\le \max_{\pi} J_{\hr}(\pi) - J_{\hr}(\hpi)\le \epsilon_{\sf alg}.$$
    Bringing these inequalities back to \pref{eq: J-pi-hpi}, we obtain that 
    $$J_r(\pi) - J_r(\hpi)\le 2H\sqrt{\Cs(\nu)}\cdot \epsilon_{\sf stat} + \epsilon_{\sf alg}.$$
\end{proof}

\subsection{Proof of \pref{thm: hard-case}}
\begin{proof}[\cpfname{thm: hard-case}]
    We construct $M$ as follows: We let $H = 2$, $\calA = \{0, 1\}$, and $\calS = \calS_1\cup \calS_2$ where $\calS_1 = \{a\}$ and $\calS_2 = \{b, c\}$. We further define the transition model $P$ as
    $$P(b \mid a, 0) = 1\quad\text{and}\quad P(c\mid a, 1) = 1,$$
    and the reward function $R$ as 
    $$R(a, 0) = R(a, 1) = 0,\quad\text{and}\quad  R(b, 0) = 1, R(b, 1) = 0,\quad \text{and}\quad R(c, 0) = \frac{2}{3}, R(c, 1) = \frac{1}{2}.$$
    The best policy $\pi^\star$ of this MDP $M$ satisfies
    $$\pi^\star(a) = \pi^\star(b) = \pi^\star(c) = 0,$$
    hence $J_r(\pi^\star) = 1$. We next choose policy $\mu$ as $\mu(a) = 0$, $\mu(b) = \mu(c) = 1$. Then we can calculate that 
    \begin{align*}
        Q^\mu(b, 0) = R(b, 0) = 1, & \quad  Q^\mu(b, 1) = R(b, 1) = 0,\\
        Q^\mu(c, 0) = R(c, 0) = \frac{2}{3}, & \quad Q^\mu(c, 1) = R(c, 1) = \frac{1}{2},\\
        Q^\mu(a, 0) = R(a, 0) + Q^\mu(b, \mu(b)) = 0,&\quad Q^\mu(a, 1) = R(a, 1) + Q^\mu(c, \mu(c)) = \frac{1}{2}.
    \end{align*}
    Therefore, the greedy policy $\hpi$ with respect to the MDP with reward function to be $Q^\mu$ is 
    $$\hpi(a) = 1, \quad \hpi(b) = 0,\quad \hpi(c) = 0,$$
    which satisfies 
    $$\max_{\pi}J_r(\pi) - J_r(\hpi) = 1 - \frac{2}{3} = \frac{1}{3}.$$
\end{proof}
\begin{remark}
    With the same choice of MDP $M$ and policy $\mu$ in the above proof, we can calculate the advantage function $A^\mu$ as  
    \begin{align*}
        A^\mu(b, 0) = 1, \quad  A^\mu(b, 1) = 0,\quad \text{and}\quad A^\mu(c, 0) = \frac{1}{6}, \quad A^\mu(c, 1) = 0,\quad \text{and}\quad 
        A^\mu(a, 0)  = 0,\quad A^\mu(a, 1) = \frac{1}{2}.
    \end{align*}
    And it is easy to verify that the greedy policy with respect to the MDP with reward function to be $A^\mu$ coincides with $\pi^\star$. 
\end{remark}

\end{document}